\documentclass[11pt,letterpaper]{article}

\usepackage[margin=1in]{geometry}
\usepackage[T1]{fontenc}
\usepackage{times}
\usepackage{microtype}
\usepackage{amsmath,amssymb,amsthm}
\usepackage{graphicx}
\usepackage{xcolor}
\usepackage{booktabs}
\usepackage{tablefootnote}
\usepackage{placeins}
\usepackage{algorithm}
\usepackage{algorithmic}
\usepackage[authoryear,round]{natbib}
\usepackage[hidelinks]{hyperref}

\newtheorem{theorem}{Theorem}[section]
\newtheorem{lemma}[theorem]{Lemma}
\newtheorem{proposition}[theorem]{Proposition}
\newtheorem{corollary}[theorem]{Corollary}
\newtheorem{conjecture}[theorem]{Conjecture}
\theoremstyle{definition}
\newtheorem{definition}[theorem]{Definition}
\newtheorem{example}{Example}
\theoremstyle{remark}
\newtheorem{remark}[theorem]{Remark}
\theoremstyle{plain}

\usepackage{amsmath,amsfonts,bm}

\def\eqref#1{equation~\ref{#1}}

\def\1{\bm{1}}

\DeclareMathAlphabet{\mathsfit}{\encodingdefault}{\sfdefault}{m}{sl}
\SetMathAlphabet{\mathsfit}{bold}{\encodingdefault}{\sfdefault}{bx}{n}

\def\K{\mathcal{K}}

\newcommand{\E}{\mathbb{E}}

\newcommand{\R}{\mathbb{R}}

\newcommand{\Reg}{\mathsf{Reg}}

\makeatletter
\@ifundefined{theorem}{\newtheorem{theorem}{Theorem}[section]}{%
  \counterwithin{theorem}{section}%
}

\@ifundefined{claim}{}{}
\@ifundefined{subclaim}{}{}
\@ifundefined{proposition}{}{}
\@ifundefined{lemma}{\newtheorem{lemma}[theorem]{Lemma}}{}
\@ifundefined{corollary}{\newtheorem{corollary}[theorem]{Corollary}}{}
\@ifundefined{conjecture}{\newtheorem{conjecture}[theorem]{Conjecture}}{}
\@ifundefined{observation}{}{}
\@ifundefined{fact}{}{}

\@ifundefined{definition}{\newtheorem{definition}[theorem]{Definition}}{}
\@ifundefined{construction}{}{}
\@ifundefined{example}{}{}
\@ifundefined{algorithm1}{}{}
\@ifundefined{protocol}{}{}
\@ifundefined{assumption}{}{}
\@ifundefined{remark}{}{}

\@ifundefined{proof}{%
  \newenvironment{proof}[1][Proof]
  {\par\noindent\textit{#1.}\ }
  {\hfill$\square$\par}%
}{}
\makeatother

\newcommand{\norm}[1]{\left\lVert#1\right\rVert}

\makeatletter
\newcommand{\floatconts}[3]{%
  \def\arxiv@figure{figure}%
  \ifx\@captype\arxiv@figure
    {\centering #3\par}\medskip
    #2\label{#1}%
  \else
    #2\label{#1}%
    #3%
  \fi
}
\makeatother

\hypersetup{
  pdftitle={Tracking a Small Set of Experts in Online Convex Optimization},
  pdfauthor={Angelos Assos, Anand Brahmbhatt, Elad Hazan},
  pdfsubject={Online convex optimization and tracking regret},
  pdfkeywords={online convex optimization, tracking regret, exp-concavity,
    computational lower bounds, planted clique}
}

\title{Tracking a Small Set of Experts in Online Convex Optimization}
\author{%
  \begin{tabular}{ccc}
    Angelos Assos\textsuperscript{1} &
    Anand Brahmbhatt\textsuperscript{1} &
    Elad Hazan\textsuperscript{1,2} \\
    \small\href{mailto:aa2279@princeton.edu}{\texttt{aa2279@princeton.edu}} &
    \small\href{mailto:ab7728@princeton.edu}{\texttt{ab7728@princeton.edu}} &
    \small\href{mailto:ehazan@princeton.edu}{\texttt{ehazan@princeton.edu}} \\[6pt]
    \multicolumn{3}{c}{\small\textsuperscript{1}Princeton University} \\
    \multicolumn{3}{c}{\small\textsuperscript{2}Google DeepMind Princeton}
  \end{tabular}%
}
\date{}

\begin{document}
\maketitle
\begin{abstract}
For the fundamental problem of prediction from expert advice, Freund asked whether one could compete with a comparator that switches $k$ times among only $m$ experts. For a discrete set of experts, this problem was resolved by \cite{bousquet2002tracking}, who gave an efficient algorithm with optimal regret matching the information-theoretic limit. 

We study the analogous problem in online convex optimization, where the action space is continuous. An inefficient algorithm can be derived by discretizing the action space and applying the Bousquet--Warmuth algorithm, giving regret of $\widetilde{\mathcal{O}}(md+k)$, while the best known efficient algorithm for this problem has regret $\widetilde{\mathcal{O}}(dk)$.  We show that this computational inefficiency is inherent, establishing a separation between the information-theoretic and polynomial-time regret achievable for this problem. Specifically, under the hypergraphic planted-clique conjecture, we show that for every polynomial-time algorithm there exist instances for which it incurs at least $(k\sqrt d)^{1-o(1)}$ regret.
\end{abstract}

\noindent\textbf{Keywords:} Online convex optimization, tracking regret,
exp-concavity, computational lower bounds, planted clique.

\section{Introduction}

On each round $t$ of an online learning problem, a learner $\mathcal A$ chooses a decision $x_t$ from a set $\K$, observes a loss function $f_t$, and incurs $f_t(x_t)$. Its regret against a comparator path $u_{1:T}\in \K^T$ is
\[
\Reg_T(\mathcal A;u_{1:T})
:=
\sum_{t=1}^T f_t(x_t)
-
\sum_{t=1}^T f_t(u_t).
\]
External regret compares the learner with the best fixed decision in hindsight, restricting the comparator to a constant path. This benchmark is well understood, with upper and lower bounds characterizing the achievable rates in standard online learning settings~\citep{hazan2023introductiononlineconvexoptimization}. In a changing environment, however, a fixed decision may be too weak a benchmark. Dynamic regret allows the comparator to change over time, but sublinear regret is impossible in general when these changes are unrestricted.

A natural intermediate benchmark, introduced by \citet{bousquet2002tracking} for prediction from a finite set of \(N\) experts, is to compete with a comparator that switches at most \(k\) times among at most \(m\) distinct experts. When \(m\ll k\), the comparator repeatedly returns to the same small set, raising the question of whether the learner can exploit this recurrence. Under log loss, the worst-case information-theoretic regret lower bound is of order
\[
\underbrace{m\log(N/m)}_{\text{identify the pool}}
+
\underbrace{k\log m}_{\text{choose within the pool}}
+
\underbrace{k\log(T/k)}_{\text{locate the switches}}
\]
in the regime \(2\le m\ll N\) and \(m\ll k\ll T\).
There are $\binom{N}{m}$ choices for the pool of experts and
$\binom{T-1}{k}$ choices for the switch locations among the $T-1$
gaps between consecutive rounds; within the pool, there are $m$
choices for the initial expert and $m-1$ choices at each switch.
Multiplying gives $\binom{N}{m}m(m-1)^k\binom{T-1}{k}$ possible
descriptions of a comparator, whose logarithm gives the order of
the information-theoretic lower bound above
\citep[Section~4.4]{bousquet2002tracking}.
Surprisingly, Bousquet and Warmuth show that an efficient algorithm
can attain a matching upper bound for mixable losses, up to
loss-dependent constants and additional logarithmic terms.

We study the analogous problem in online convex optimization (OCO), where $\K\subseteq\mathbb R^d$ is a continuous convex decision set and the losses $f_t$ are convex. For constant $m$ and Lipschitz exp-concave losses on a bounded domain, discretizing the domain and applying the Bousquet--Warmuth algorithm gives regret $\widetilde O(d+k)$. We make this reduction precise in Section~\ref{sec:exp-concave-motivation}. However, the discretization can contain exponentially many points in $d$,
making this implementation computationally inefficient. On the other hand, polynomial-time algorithms achieve \(\widetilde O(dk)\) regret for \(k\ge1\) (Theorem~\ref{thm:ons-switching-upper}). These bounds leave open the best achievable dependence on \(d\) and \(k\) under polynomial-time computation. We therefore ask:
\begin{center}
\itshape
What is the best regret rate a computationally efficient algorithm can achieve in this setting?
\end{center}

\subsection{Overview of Our Results}
\label{sec:results-overview}

Our main contribution is a conditional separation between information-theoretic regret and the regret achievable by polynomial-time algorithms, even when the comparator uses only two distinct points.

\paragraph{Computational lower bound.}
Assuming hypergraphic planted-clique hardness for every fixed
order $r\ge4$, Theorem~\ref{thm:r-hpc-lower-bound} rules out
polynomial-time $\widetilde O(k^a d^b)$ expected regret along
$k=\Theta(d^\theta)$ whenever $\theta a+b<\theta+1/2$,
for fixed rational $\theta>0$ and fixed $a,b\ge0$.
In particular, no such guarantee holds for all $(k,d)$
if $a<1$ or $b<1/2$, by taking $\theta$ sufficiently
large or small, respectively. This holds even for $m=2$ and bounded
$1$-exp-concave rank-one quadratics on the Euclidean unit ball,
with $T=\operatorname{poly}(d)$.
Figure~\ref{fig:regret-exponents} illustrates the computational
separation for $\theta>1/2$.

\paragraph{Efficient upper bounds.}
Standard adaptive-regret methods give $\widetilde O(k)$ regret
for strongly convex losses (Theorem~\ref{prop:strong}) and
$\widetilde O(dk)$ regret for exp-concave losses
(Theorem~\ref{thm:ons-switching-upper}).
A gap remains between our computational lower bound and the polynomial-time upper bounds in the exp-concave case, leaving the optimal polynomial-time rate open.

\vspace{2mm}
Table~\ref{tab:regret-comparison} summarizes the results across the three curvature regimes. For convex and strongly convex losses, efficient algorithms already attain the information-theoretically optimal dependence on \(k\), with matching lower bounds. For exp-concave losses, by contrast, recurrence creates a gap between the information-theoretic and efficiently achievable rates.

\begin{table}[htb]
\centering
\caption{Regret bounds for constant $m\ge2$ and $k\ge1$,
with loss and domain parameters fixed.}
\label{tab:regret-comparison}
\small
\setlength{\tabcolsep}{4pt}
\begin{tabular}{@{}lcccc@{}}
\toprule
Loss class
& \shortstack{Information-theoretic\\lower bound}
& \shortstack{Polynomial-time\\upper bound}
& \shortstack{Computational\\lower bound} \\
\midrule
Convex
& \shortstack{$\Omega(\sqrt{kT})$[\citenum{pasteris2024switching}]\tablefootnote{Partition the horizon into $k+1$ equal blocks and use independent random linear losses on each block. Each block contributes $\Omega(\sqrt{T/k})$ regret, with its best comparator in $\{-1,1\}$, yielding $\Omega(\sqrt{kT})$.}}
& \shortstack{$O(\sqrt{kT})$ [\citenum{pasteris2024switching}]}
& -- \\
\addlinespace
Strongly convex
& $\Omega(k)$\tablefootnote{On block $j$, use $f_t(x)=(x-\sigma_j)^2$ with independent uniform $\sigma_j\in\{-1,1\}$. The first round of each block incurs constant expected regret, yielding $\Omega(k)$.}
& \shortstack{$\widetilde O(k)$ [Thm.~\ref{prop:strong}]}
& -- \\
\addlinespace
Exp-concave
& $\widetilde\Omega(d+k)$ [\citenum{bousquet2002tracking}]
& \shortstack{$\widetilde O(dk)$ [Thm.~\ref{thm:ons-switching-upper}]}
& \shortstack{$(k\sqrt{d})^{1-o(1)}$
[Thm.~\ref{thm:r-hpc-lower-bound}]} \\
\bottomrule
\end{tabular}
\end{table}
\FloatBarrier

\begin{figure}[!htb]
\floatconts
{fig:regret-exponents}
{\caption{Regret exponents for $k=\Theta(d^\theta)$ (fixed rational
$\theta>0$), ignoring logarithmic factors.
Theorem~\ref{thm:r-hpc-lower-bound} conditionally rules out every
fixed exponent below the dashed line. Shading marks the
computational separation for $\theta>1/2$.}}
{\includegraphics[width=0.8\linewidth]{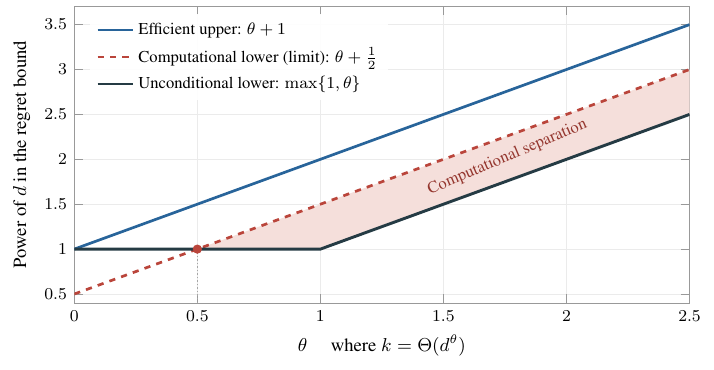}}
\end{figure}

\subsection{From Mixable Experts to Exp-Concave OCO}
\label{sec:exp-concave-motivation}

The Bousquet--Warmuth algorithm for a finite set of experts
relies on \emph{mixability} of losses. For an $\alpha$-mixable
prediction loss, expert predictions can be combined using
probability weights $p_i$ into a single prediction whose loss,
for every outcome, is at most
$-\alpha^{-1}\log\sum_i p_i e^{-\alpha\ell_i}$.
Exp-concavity supplies precisely this property in the OCO setting, i.e. for any
$v_1,\ldots,v_N\in\K$ and probability weights $p_1,\ldots,p_N$,
concavity of $e^{-\alpha f}$ gives
\[
    f\!\left(\sum_{i=1}^N p_i v_i\right)
    \le -\frac{1}{\alpha}
        \log\left(\sum_{i=1}^N p_i e^{-\alpha f(v_i)}\right).
\]

To tackle the OCO problem, treat each point in a finite net of $\K$ as an expert.
Assuming exp-concavity of the loss function, the inequality above allows the learner to run Bousquet--Warmuth
by playing weighted averages of these points.
For Lipschitz losses on a bounded domain, a sufficiently fine
net has $\log N=O(d\log T)$ and approximates every comparator
with at most constant additional loss, without increasing
its number of distinct points or switches.\footnote{For
$G$-Lipschitz losses on a domain of diameter $D$, choose
$\varepsilon=1/(GT)$. There exists an $\varepsilon$-net
contained in $\K$ with $N\le(1+2D/\varepsilon)^d
=(1+2DGT)^d$ points. Mapping each distinct comparator point
to a fixed net point within distance $\varepsilon$ adds at
most $GT\varepsilon=1$ to its cumulative loss and cannot
increase the number of distinct points or switches.}
The Bousquet--Warmuth guarantee then gives
$\widetilde O(m\log N+k)=\widetilde O(md+k)$ regret,
with loss and domain parameters fixed.
However, the net can be exponentially large in $d$,
making this implementation inefficient.

\subsection{Overview of the Hardness Reduction}
\label{sec:techniques}

We reduce hypergraphic planted-clique detection to online convex
optimization. The hypergraphic planted-clique
conjecture~\citep{luo2020hpc} asserts that no polynomial-time
algorithm can distinguish a random hypergraph on $N$ vertices
from one with a planted clique whose size is polynomially
below $\sqrt N$. We map the hypergraph to bounded exp-concave
losses on the Euclidean unit ball.

We measure the learner's cumulative gain over always playing
the origin by $G=\sum_{t=1}^T(f_t(0)-f_t(x_t))$.
Under the planted model, the clique determines a reference
sequence $u_{1:T}$ that plays one fixed point on selected
blocks and the origin elsewhere. This comparator uses only
two distinct decisions and, with high probability, has a
large advantage $\Delta=\sum_{t=1}^T(f_t(0)-f_t(u_t))$.
The identity
\[
    G=\Delta-\Reg_T(u_{1:T})
\]
then implies that sufficiently small expected regret forces
a large expected gain. Under the null model, however, the
labels are independent unbiased signs, making a large
positive gain unlikely. Thus, sufficiently small regret 
would let us detect the planted clique through the learner’s observable gain.
Section~\ref{sec:hardness} gives the construction and formal
argument.

\section{Related Work}\label{sec:related_work}

\paragraph{Tracking experts and long-term memory.}
\cite{herbster1998tracking} introduced Fixed Share for comparator sequences with a bounded number of switches. For comparators that repeatedly return to a small pool of experts, a problem posed by Yoav Freund, \cite{bousquet2002tracking} introduced mixing past posteriors to reuse information about previously successful experts. Extensions cover stateful experts, Bayesian formulations, mirror descent, and growing expert sets~\citep{koolen2010freezing,koolen2012putting,cesabianchi2012mirror,mourtada2017efficient}. \cite{zheng2019longterm} resolved the question of \cite{warmuth2014shifting} on exploiting favorable stochastic structure and extended the framework to bandit feedback. \cite{robinson2021memory} reduced the per-round cost of mixing-past-posteriors methods to linear, while \cite{herbster2020multitask} extended long-term memory to multitask learning and structured infinite RKHS classes.

\paragraph{Shifting and adaptive regret in OCO.}
A parallel literature studies changing comparators in continuous decision spaces without exploiting recurrence. \cite{herbster2001linear} studied shifting regret for online linear prediction, and \cite{gyorgy2016shifting} developed an OCO framework measuring comparator complexity by deviation from prescribed dynamics. Adaptive regret requires low static regret on every time interval~\citep{hazan2009changing}; \cite{daniely2015strongly} strengthened these guarantees to depend on interval length. Summing over a piecewise-constant comparator's segments yields shifting-regret bounds, but pays the static learning cost on each segment. \cite{pasteris2024switching} obtained optimal shifting regret for general convex OCO, while \cite{yang2026logarithmic} obtained logarithmic shifting regret for exp-concave and strongly convex OCO.

\paragraph{Static OCO and curvature.}
Online gradient descent achieves $O(\sqrt{T})$ static regret for general convex losses~\citep{zinkevich2003online}. Strongly convex losses admit dimension-free $O(\log T)$ regret, while Online Newton Step achieves $O(d\log T)$ for exp-concave losses~\citep{hazan2007logarithmic}. Consequently, relearning on every segment is inexpensive for strongly convex losses but incurs a dimension-dependent cost in the general exp-concave case. See \cite{hazan2023introductiononlineconvexoptimization} for a treatment of static, adaptive, and dynamic regret in OCO.

\paragraph{Planted clique and average-case hardness.}
Planted clique is a classical source of average-case computational hardness. \citet{alon1998hidden} gave a polynomial-time spectral recovery algorithm above the $\sqrt N$ scale; \citet{jerrum1992large} provided hardness evidence by analyzing the failure of the Metropolis process. \citet{Hazan2011HowHard} used planted-clique hardness to show that finding an approximate Nash equilibrium with near-optimal social welfare is hard. Hypergraphic planted clique has been used to establish statistical--computational gaps in tensor estimation and clustering~\citep{zhang2018tensor,luo2022tensor}. \citet{luo2020hpc} formulated its average-case hardness conjecture and studied its relation to ordinary planted clique, while \citet{brennan2020reducibility} developed related reductions based on planted-clique variants.

\section{Preliminaries}
\label{sec:benchmark}

\paragraph{Notation.}
We write $[n]=\{1,\ldots,n\}$ and use $\1\{A\}$ for the indicator of an event $A$.
The Euclidean inner product and norm are denoted by $\langle x,y\rangle$ and $\norm{x}_2$,
and $B_2^d=\{x\in\R^d:\norm{x}_2\le1\}$ is the Euclidean unit ball.
The notation $\widetilde O$ and $\widetilde\Omega$ suppresses polylogarithmic factors.

\paragraph{Online learning protocol.}
We consider online convex optimization over a nonempty compact convex set $\K\subseteq\R^d$ of diameter $D$. On round $t$, the learner chooses $x_t\in \K$ based on previous rounds, after which a differentiable convex loss $f_t$ is revealed and the learner incurs $f_t(x_t)$. The aim is to keep the cumulative loss close
to that of a comparator whose decision may change over time.

We study this problem for exp-concave losses and, as a special case,
strongly convex losses with bounded gradients. We state the definitions below.

\begin{definition}[Exp-concavity]
\label{def:exp-concavity}
For $\alpha>0$, a function $f:\K\to\R$ is \emph{$\alpha$-exp-concave} if
$x\mapsto e^{-\alpha f(x)}$ is concave on $\K$.
\end{definition}

\begin{definition}[Strong convexity]
\label{def:strong-convexity}
For $\lambda>0$, a differentiable function $f:\K\to\R$
is \emph{$\lambda$-strongly convex} if, for every $x,y\in \K$,
\[
f(y)\ge
f(x)+\langle\nabla f(x),y-x\rangle
+\frac{\lambda}{2}\norm{y-x}_2^2.
\]
\end{definition}

Our benchmark distinguishes the number of comparator switches from the number of distinct decisions, allowing us to capture recurrence.

\begin{definition}[Comparator paths]
\label{def:comparator-paths}
A \emph{comparator path} is a sequence $u_{1:T}=(u_1,\ldots,u_T)\in \K^T$.
Its number of switches and number of distinct points are, respectively,
\[
k(u_{1:T})
:=
\sum_{t=2}^T \1\{u_t\ne u_{t-1}\},
\qquad
m(u_{1:T})
:=
|\{u_1,\ldots,u_T\}|.
\]
For integers $1\le m\le T$ and $0\le k\le T-1$, define the comparator class
\[
\mathcal U_{m,k}(\K)
:=
\left\{
u_{1:T}\in \K^T:
m(u_{1:T})\le m,\quad
k(u_{1:T})\le k
\right\}.
\]
Thus $\mathcal U_{m,k}(\K)$ consists of paths that switch
at most $k$ times among at most $m$ distinct points.
\end{definition}

The learner is not given the comparator points or their switch times.
We compare its cumulative loss with that of the best admissible
comparator path in hindsight.

\begin{definition}[Tracking regret]
\label{def:regret}
For an algorithm $\mathcal A$ that outputs decisions
$x_{1:T}\in\K^T$ and faces losses $f_{1:T}$,
the tracking regret with at most $k$ switches and $m$ distinct points is
\[
\Reg_T(\mathcal A;m,k)
:=
\sum_{t=1}^T f_t(x_t)
-
\min_{u_{1:T}\in\mathcal U_{m,k}(\K)}
\sum_{t=1}^T f_t(u_t).
\]
\end{definition}

For randomized learners, we consider $\E[\Reg_T(\mathcal A;m,k)]$, where the expectation is over the learner's internal randomness for each fixed loss sequence. We seek learners with low tracking regret, and call a learner computationally efficient if its total running time is polynomial in $T, m, k$ and $d$.

We also define adaptive regret below, which will be useful for our analysis.
Introduced by \citet{hazan2009changing}, it compares the learner with
the best fixed decision on every time interval.

\begin{definition}[Adaptive regret]
\label{def:adaptive-regret}
For an algorithm $\mathcal{A}$ that outputs decisions $x_{1:T}\in\K^T$ and faces losses $f_{1:T}$, the adaptive regret is
\[
    \mathsf{AReg}_T(\mathcal A)
    :=
    \max_{1\le r\le s\le T}
    \left\{
        \sum_{t=r}^s f_t(x_t)
        -\min_{u\in\K}\sum_{t=r}^s f_t(u)
    \right\}.
\]
\end{definition}

\section{Upper Bounds}

We derive standard switching bounds from adaptive-regret guarantees.
Follow-the-Leading-History (FLH)~\citep{hazan2009changing} combines
restarted copies of a base learner using exponential weights and plays
their weighted-average decision. Its pruned version gives the following
guarantee; Appendix~\ref{app:flh} describes the algorithm.

\begin{theorem}[\citealp{hazan2009changing}]
\label{thm:static-to-adaptive}
Suppose the losses are $\alpha$-exp-concave on $\K$.
Given a deterministic base learner $\mathcal B$ with nonnegative,
nondecreasing static-regret bound $R(T)$, pruned FLH produces an
algorithm $\mathcal A$ that maintains $O(\log T)$ active copies
and satisfies
\[
    \mathsf{AReg}_T(\mathcal A)
    \le O\!\left(
        R(T)\log T+\alpha^{-1}\log^2 T
    \right).
\]
The construction preserves oracle efficiency.
\end{theorem}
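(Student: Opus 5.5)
We prove the statement by reconstructing the Follow-the-Leading-History argument of \citet{hazan2009changing}, in two stages. First we analyze the unpruned version, which starts a fresh copy $\mathcal B_s$ of the base learner at every round $s$. Then we show that pruning to $O(\log T)$ active copies costs only an extra $\log T$ factor.

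\emph{Unpruned FLH.} On round $t$ the algorithm keeps weights $p_t^{(s)}$ over the copies started at rounds $s\le t$ and plays $x_t=\sum_s p_t^{(s)}x_t^{(s)}$. Weights are updated multiplicatively by $e^{-\alpha f_t(x_t^{(s)})}$. At each round a new copy enters with mass $1/t$, and the old copies are rescaled by $(1-1/t)$.

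By $\alpha$-exp-concavity and the mixability inequality of Section~\ref{sec:exp-concave-motivation},
\[
f_t(x_t)\le -\alpha^{-1}\log\sum_s p_t^{(s)}e^{-\alpha f_t(x_t^{(s)})}.
\]
This is exactly the exponential-weights potential decrement. Fix an interval $[r,s']$ and track the weight of copy $\mathcal B_r$. On entry at round $r$ its weight is $1/r$, and each later rescaling multiplies it by $(1-1/t)$. The product of these factors from $r$ to $s'$ is at least $r/s'$, so its log-weight stays $\ge -O(\log T)$.

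Telescoping the potential over $t\in[r,s']$ then gives
\[
\sum_{t=r}^{s'} f_t(x_t)-\sum_{t=r}^{s'} f_t(x_t^{(r)})\le O(\alpha^{-1}\log T).
\]
Copy $\mathcal B_r$ is deterministic and was started at round $r$, so its static regret on $[r,s']$ is at most $R(s'-r+1)\le R(T)$. Adding the two bounds, unpruned FLH has adaptive regret at most $R(T)+O(\alpha^{-1}\log T)$.

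\emph{Pruning.} This is the main obstacle. We use the standard lifetime schedule: a copy started at $s=2^i\cdot o$, with $o$ odd, stays alive for about $2^{i+2}$ rounds. Two properties of this schedule matter.
\begin{enumerate}
\item At most $O(\log T)$ copies are alive at any round.
\item For every $s<t$, some copy started in $[s,t]$ remains alive through round $t$, and its start lies in the latter half of $[s,t]$ (roughly).
\end{enumerate}

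Given an interval $I=[r,s']$, we cover it greedily from the right. By property 2, some copy alive at $s'$ started at a round $s_1\in I$ with $s'-s_1$ at least about half of $|I|$. We recurse on $[r,s_1-1]$. This decomposes $I$ into $O(\log T)$ subintervals, on each of which a single copy is alive throughout.

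On each subinterval we apply the unpruned argument, which costs $R(T)+O(\alpha^{-1}\log T)$. One adjustment is needed: when copies die, their mass is redistributed, and we renormalize. This keeps each alive copy's log-weight at least $-O(\log T)$ relative to the normalized weights, so the per-subinterval bound is unchanged.

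Summing over the $O(\log T)$ subintervals gives
\[
O\bigl(R(T)\log T+\alpha^{-1}\log^2 T\bigr),
\]
using that $R$ is nonnegative and nondecreasing.

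\emph{Oracle efficiency.} Each round requires updating $O(\log T)$ base learners, forming one weighted average, and performing $O(\log T)$ scalar weight updates. The running time is therefore $O(\log T)$ times the per-round cost of $\mathcal B$, which preserves oracle efficiency.
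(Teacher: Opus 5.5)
Your proposal is correct and is essentially the standard Hazan--Seshadhri argument that the paper invokes by citation. The paper gives no proof of its own; it only states the pruned algorithm and the coverage property $S_t\cap[r,\lfloor (r+t)/2\rfloor]\neq\varnothing$ in Appendix~\ref{app:flh}. Your renormalization step is sound: pruning divides by a total mass at most $1$, so surviving weights can only increase. One slip to fix: your property 2 says the surviving copy's start lies in the \emph{latter} half of $[s,t]$, but the halving in your greedy cover needs a start in the \emph{first} half. That is what the schedule provides, and it is what you actually use when you write $s'-s_1\ge|I|/2$.
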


Adaptive regret bounds regret on each constant segment of a comparator.
Summing over at most $k+1$ segments gives the following switching bound,
proved in Appendix~\ref{app:upper-bound-proofs}.

\begin{corollary}
\label{cor:static-to-switching}
Under the assumptions of Theorem~\ref{thm:static-to-adaptive},
the algorithm $\mathcal A$ satisfies, for every $1\le m\le T$
and $0\le k\le T-1$,
\[
    \Reg_T(\mathcal A;m,k)
    \le (k+1)\mathsf{AReg}_T(\mathcal A)
    \le O\!\left((k+1)\cdot \left[
        R(T)\log T+\alpha^{-1}\log^2 T
    \right]\right).
\]
The bound is independent of $m$, and the algorithm requires
neither $m$ nor $k$ nor the switch times.
\end{corollary}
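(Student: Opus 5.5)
Fix any comparator $u_{1:T}\in\mathcal U_{m,k}(\K)$; the plan is to bound the regret against this one path and then take the minimum over the class. Since $u_{1:T}$ has at most $k$ switches, the rounds split into at most $k+1$ maximal contiguous intervals $I_j=[r_j,s_j]$, $j=1,\dots,J$ with $J\le k+1$, and on each interval $u_t\equiv v_j$ for a single point $v_j\in\K$. The regret then decomposes by segment:
\[
\sum_{t=1}^T f_t(x_t)-\sum_{t=1}^T f_t(u_t)
=\sum_{j=1}^{J}\Bigl(\sum_{t=r_j}^{s_j} f_t(x_t)-\sum_{t=r_j}^{s_j} f_t(v_j)\Bigr).
\]

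Each summand is at most $\sum_{t\in I_j} f_t(x_t)-\min_{u\in\K}\sum_{t\in I_j} f_t(u)$. By Definition~\ref{def:adaptive-regret}, this is bounded by $\mathsf{AReg}_T(\mathcal A)$. One small point needs care when $J<k+1$: the adaptive regret is nonnegative, because on the interval $[1,1]$ it is at least $f_1(x_1)-\min_u f_1(u)\ge 0$. Hence $J\cdot\mathsf{AReg}_T\le (k+1)\mathsf{AReg}_T$. The bound does not depend on $u_{1:T}$, so taking the minimum over $\mathcal U_{m,k}(\K)$ gives the first inequality. The number of distinct points $m$ plays no role in the argument, which is why the bound is independent of $m$.

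The second inequality is then immediate from Theorem~\ref{thm:static-to-adaptive}. Multiply its adaptive-regret bound by $(k+1)$.

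It remains to justify that the algorithm needs no knowledge of $m$, $k$, or the switch times. This holds because $\mathcal A$ is the pruned FLH construction of Theorem~\ref{thm:static-to-adaptive}, which depends only on $\mathcal B$, $\alpha$ and $T$. The segment decomposition above is used only in the analysis, and it holds simultaneously for every $m$ and $k$.

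There is no real obstacle here. The only subtlety is the nonnegativity of $\mathsf{AReg}_T$, which is needed to pass from $J$ to $k+1$ when $J<k+1$.
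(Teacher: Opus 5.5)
Your proposal is correct and follows the paper's proof: you split the comparator into at most $k+1$ maximal constant segments, bound each segment's regret by $\mathsf{AReg}_T(\mathcal A)$, and sum. Your explicit check that $\mathsf{AReg}_T(\mathcal A)\ge 0$, via the interval $[1,1]$, fills in a step the paper leaves implicit when it passes from the actual number of segments to $k+1$.
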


For strongly convex losses, using projected online gradient descent (OGD)
as the base learner yields a dimension-free switching bound.

\begin{theorem}
\label{prop:strong}
Suppose every $f_t$ is twice differentiable, $\lambda$-strongly convex,
and $G$-Lipschitz on $\K$.
There is a deterministic oracle-efficient algorithm $\mathcal A$ such that,
for every $1\le m\le T$ and $1\le k\le T-1$,
\[
    \Reg_T(\mathcal A;m,k)
    \le O\!\left(\frac{G^2}{\lambda}k\log^2 T\right).
\]
\end{theorem}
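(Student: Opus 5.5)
The plan is to instantiate Corollary~\ref{cor:static-to-switching} with projected online gradient descent as the base learner $\mathcal B$, using step sizes $\eta_t=1/(\lambda t)$. Two ingredients are needed beyond the corollary itself.

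\textbf{Step 1: static regret of OGD.} The classical analysis of \citet{hazan2007logarithmic} for $\lambda$-strongly convex, $G$-Lipschitz losses gives
\[
R(T)\le \frac{G^2}{2\lambda}(1+\log T).
\]
This bound is nonnegative and nondecreasing in $T$. The base learner is deterministic and needs only one gradient oracle call and one projection per round, so it is oracle-efficient.

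\textbf{Step 2: strong convexity implies exp-concavity.} Theorem~\ref{thm:static-to-adaptive} requires the losses to be exp-concave, which is where twice differentiability enters. Take $\alpha=\lambda/G^2$. Then
\[
\nabla^2 e^{-\alpha f}=\alpha e^{-\alpha f}\left(\alpha\nabla f\nabla f^\top-\nabla^2 f\right).
\]
Strong convexity gives $\nabla^2 f\succeq\lambda I$, and the Lipschitz condition gives $\alpha\nabla f\nabla f^\top\preceq\alpha G^2 I=\lambda I$. Hence the Hessian is negative semidefinite on $\K$, so each $f_t$ is $\lambda/G^2$-exp-concave. Concavity on the convex set follows from the Hessian condition on its interior together with continuity up to the boundary. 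This is the only step that needs care, and it is routine.

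\textbf{Step 3: combine.} Plug $R(T)=O\big((G^2/\lambda)\log T\big)$ and $\alpha^{-1}=G^2/\lambda$ into Corollary~\ref{cor:static-to-switching}. This gives
\[
\Reg_T(\mathcal A;m,k)\le O\!\left((k+1)\frac{G^2}{\lambda}\log^2T\right).
\]
Since $k\ge1$, we have $k+1\le 2k$, which yields the stated bound.

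The resulting algorithm is deterministic. Pruned FLH plays a weighted average of $O(\log T)$ deterministic copies, with weights that depend only on observed losses. Since the base learner is oracle-efficient, Theorem~\ref{thm:static-to-adaptive} preserves oracle efficiency for $\mathcal A$.
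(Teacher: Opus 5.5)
Your proposal is correct and follows the paper's proof: projected OGD with $\eta_t=1/(\lambda t)$ as the base learner, exp-concavity with $\alpha=\lambda/G^2$ from $\nabla^2 f\succeq\lambda I\succeq(\lambda/G^2)\nabla f\nabla f^\top$, and then Corollary~\ref{cor:static-to-switching}. Your explicit Hessian computation for $e^{-\alpha f}$ and the step $k+1\le 2k$ only spell out details the paper leaves implicit.
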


\begin{proof}
\citet{hazan2007logarithmic} show that projected online gradient descent, restarted with step sizes
$\eta_t=1/(\lambda t)$ in its local time, has static regret
\[
    R_{\mathrm{OGD}}(n)
    \le O\!\left(\frac{G^2}{\lambda}\log n\right)\,.
\]
Moreover, strong convexity and the gradient bound give
\[
    \nabla^2 f_t(x)\succeq\lambda I
    \succeq\frac{\lambda}{G^2}
        \nabla f_t(x)\nabla f_t(x)^\top,
\]
where for the RHS we used the Cauchy-Schwarz inequality, so each loss is $(\lambda/G^2)$-exp-concave. Applying
Corollary~\ref{cor:static-to-switching} with OGD as the base learner
and $\alpha=\lambda/G^2$ gives the desired algorithm $\mathcal A$
and proves the bound for every $m$ and $k$.
\end{proof}

The bound is independent of the number of distinct comparator points:
strong convexity makes relearning after each switch inexpensive, without
explicitly exploiting recurrence.

For general exp-concave losses, we instead use Online Newton Step (ONS).
Its static-regret bound introduces a factor of $d$, which carries over
to the switching guarantee.

\begin{theorem}
\label{thm:ons-switching-upper}
Suppose $\K$ has Euclidean diameter at most $D$, and every loss $f_t$
is $\alpha$-exp-concave and $G$-Lipschitz on $\K$.
There is a deterministic oracle-efficient algorithm $\mathcal A$ such that,
for every $1\le m\le T$ and $1\le k\le T-1$,
\[
    \Reg_T(\mathcal A;m,k)
    \le O\!\left(d\left(\frac{1}{\alpha}+GD\right)
        k\log^2 T\right).
\]
In particular, this is $O(dk\log^2 T)$ when
$\alpha$, $G$, and $D$ are constants.
\end{theorem}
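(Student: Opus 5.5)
The plan mirrors the proof of Theorem~\ref{prop:strong}, with Online Newton Step (ONS) as the base learner in place of OGD. First I fix the base learner. Following \citet{hazan2007logarithmic}, ONS run on $\alpha$-exp-concave, $G$-Lipschitz losses over a domain of diameter $D$ uses parameter $\gamma=\tfrac12\min\{1/(4GD),\alpha\}$ and initial regularizer $\epsilon I$ with $\epsilon=1/(\gamma^2D^2)$. Its static regret on a horizon of length $n$ is then
\[
R_{\mathrm{ONS}}(n)\le 5\left(\frac1\alpha+GD\right)d\log n .
\]
This bound is nonnegative and nondecreasing in $n$, as Theorem~\ref{thm:static-to-adaptive} requires. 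ONS is deterministic, and each step needs only a gradient and a generalized projection onto $\K$, so it is oracle-efficient. Restarted copies inside FLH run ONS in local time, so the bound $R(T)=R_{\mathrm{ONS}}(T)$ applies to every copy.

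Second, I invoke Theorem~\ref{thm:static-to-adaptive} with this base learner to get
\[
\mathsf{AReg}_T(\mathcal A)\le O\!\left(d\left(\frac1\alpha+GD\right)\log^2T+\frac1\alpha\log^2T\right)=O\!\left(d\left(\frac1\alpha+GD\right)\log^2T\right).
\]
The second term is absorbed because $d\ge1$. Applying Corollary~\ref{cor:static-to-switching} then gives
\[
\Reg_T(\mathcal A;m,k)\le (k+1)\,\mathsf{AReg}_T(\mathcal A).
\]
Since $k\ge1$, we have $k+1\le 2k$, which yields the stated bound, uniformly in $m$. Setting $\alpha,G,D$ to constants gives the special case $O(dk\log^2T)$.

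The main thing to check is that the ONS guarantee is really in the form the FLH black box needs. Two points matter here. The first is that ONS uses the exp-concavity inequality $f(y)\ge f(x)+\nabla f(x)^\top(y-x)+\tfrac\gamma2(\nabla f(x)^\top(y-x))^2$. This inequality follows from $\alpha$-exp-concavity together with the bounds $G$ on the gradients and $D$ on the diameter, and it is the source of the $1/\alpha+GD$ factor. The second is that FLH combines the copies using the same $\alpha$, and the $\alpha^{-1}\log^2T$ overhead is dominated by the main term. Beyond these checks the argument is direct composition of earlier results.
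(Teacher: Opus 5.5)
Your proof is correct and follows the paper's argument exactly. You plug the ONS static-regret bound $O(d(1/\alpha+GD)\log n)$ into the FLH adaptive-regret theorem, absorb the $\alpha^{-1}\log^2 T$ aggregation term using $d\ge1$, and apply the switching corollary, with your remark that $k+1\le 2k$ for $k\ge1$ making explicit a step the paper leaves implicit.
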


\begin{proof}
The static-regret guarantee for ONS \citep{hazan2007logarithmic} is
\begin{equation}
\label{eq:ons-static}
    R_{\mathrm{ONS}}(n)
    \le O\!\left(d\left(\frac{1}{\alpha}+GD\right)\log n\right).
\end{equation}
Theorem~\ref{thm:static-to-adaptive}, applied with ONS as the
base learner, therefore gives an algorithm $\mathcal A$ satisfying
\begin{equation}
\label{eq:adaptive-ons}
    \mathsf{AReg}_T(\mathcal A)
    \le O\!\left(d\left(\frac{1}{\alpha}+GD\right)\log^2 T\right),
\end{equation}
where the aggregation term is absorbed using $d\ge1$.
The switching bound follows from Corollary~\ref{cor:static-to-switching}.
\end{proof}

Both algorithms learn each constant segment separately, without
amortizing the learning cost across recurring comparator values.

\section{Computational Hardness for Exp-Concave Functions}
\label{sec:hardness}

As noted in the previous section, combining Follow-the-Leading-History with Online Newton Step gives an efficient algorithm with regret \(\widetilde{\mathcal O}(kd)\) against a comparator that switches at most $k$ times. On the other hand, discretizing the continuous decision set and applying the long-term-memory algorithm of \cite{bousquet2002tracking} gives the regret $\widetilde{\mathcal O}(md+k)$, but requires time exponential in the dimension. In this section, we show that this computational gap is inherent, i.e. even when the comparator takes only $m=2$ distinct values, no polynomial-time algorithm can attain $\widetilde{\mathcal O}(md+k)$ regret.

Our lower bound is based on the hypergraphic planted-clique conjecture of \cite{luo2020hpc}. Fix an integer $r\geq 2$. Let $\mathsf H^{(r)}_0(N)$ denote the distribution over $r$-uniform hypergraphs on $[N]$ in which every $r$-subset of vertices is independently included as a hyperedge with probability $1/2$. Under $\mathsf H^{(r)}_1(N,q)$, we first sample from $\mathsf H^{(r)}_0(N)$, choose a uniformly random set $C\subseteq[N]$ of size $q$, and then force every $r$-subset of $C$ to be a hyperedge. We write $\mathbb E_0,\Pr_0$ and $\mathbb E_1,\Pr_1$ for expectation and probability under the $\mathsf H^{(r)}_0(N)$ and $\mathsf H^{(r)}_1(N,q)$, respectively.

\begin{conjecture}[$r$-HPC~\citep{luo2020hpc}]
\label{conj:hpc}
Fix $r\ge 2$ and $\beta\in(0,1/2)$, and let $q=\lfloor N^\beta\rfloor$. There is no randomized polynomial-time test that, given $H$, outputs ``\textsf{null}" on
\(
H\sim\mathsf H^{(r)}_0(N)
\)
and ``\textsf{planted}" on
\(
H\sim\mathsf H^{(r)}_1(N,q)
\)
with probability tending to one in both cases as $N\to\infty$.
\end{conjecture}

\cite{luo2020hpc} formulate this conjecture for every fixed hyperedge order and clique sizes polynomially below the $\sqrt N$ scale. Our strongest lower bound is obtained by allowing the hyperedge order to vary over fixed constants.

\begin{theorem}[Computational lower bound under $r$-HPC]
\label{thm:r-hpc-lower-bound}
Assume Conjecture~\ref{conj:hpc} for every fixed $r\ge4$.
For every fixed rational $\theta>0$ and fixed $a,b\ge0$ satisfying
\[
\theta a+b<\theta+\frac12,
\]
no randomized polynomial-time OCO algorithm can guarantee
\[
\E[\Reg_T(\mathcal A;2,k)]=\widetilde{\mathcal O}(k^a d^b)
\]
uniformly over instances with $k=\Theta(d^\theta)$.
This holds even for bounded $1$-exp-concave rank-one quadratic
losses on $B_2^d$, with $T=\operatorname{poly}(d)$.
In particular, this rules out any uniform guarantee
$\widetilde O(k^a d^b)$ over all $(k,d)$ whenever
$a<1$ or $b<1/2$, by choosing $\theta$ sufficiently
large or sufficiently small, respectively.
\end{theorem}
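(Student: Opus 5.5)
We reduce hypergraphic planted-clique detection to OCO, following the outline in Section~\ref{sec:techniques}: encode an $r$-uniform hypergraph $H$ on $[N]$ as a stream of rank-one quadratic losses. The learner's gain $G=\sum_t(f_t(0)-f_t(x_t))$ over the origin is observable to the test, which outputs ``\textsf{planted}'' iff $G\ge\tau$ for a threshold $\tau$ of order $\Delta/2$.

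\textbf{Construction.} Coordinates of $\R^d$ are indexed by $s$-subsets $S\subseteq[N]$, for a parameter $s\le r-1$ to be tuned; we may take $d=\binom{N}{s}$, possibly padded. Time is split into blocks. Each block is labelled by an $(r-s)$-subset $V$; $V$ may be a single vertex when $s=r-1$, and blocks may be repeated or subsampled to tune $k$. Within a block, one round is devoted to each $S$ disjoint from $V$ and ordered after it. On that round the loss is
\[
f_t(x)=c\bigl(\langle e_S,x\rangle-\eta y_t\bigr)^2 ,
\]
where $y_t=+1$ if $S\cup V\in H$ and $-1$ otherwise, and $c,\eta$ are small constants. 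On $B_2^d$ these losses are bounded and rank one. They are $1$-exp-concave because $|c\,\eta|$ and $|\langle e_S,x\rangle|$ are bounded, so the standard criterion $\nabla^2 f\succeq\nabla f\nabla f^\top$ holds for a rank-one quadratic with small range. The ordering constraint ensures that every hyperedge is queried at most once, which is the key point for the null analysis. We take $T=\operatorname{poly}(N)$.

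\textbf{Planted side.} Let $u^\star=M^{-1/2}\sum_{S\subseteq C,\,|S|=s}e_S$, with $M=\binom{q}{s}$. The comparator plays $u^\star$ on blocks whose label $V$ lies in $C$ and plays $0$ elsewhere. It uses two points, and its number of switches is about twice the number of clique blocks. On a clique block every queried $S\subseteq C$ has $y_t=+1$, so each such round gains $\Theta(\eta c/\sqrt M)$ relative to $0$. The quadratic term costs only $O(c/M)$ per round. Hence each clique block gains $\Theta(\sqrt M)=\Theta(q^{s/2})$, and
\[
\Delta=\Theta\bigl(\#\text{clique blocks}\cdot q^{s/2}\bigr)
\]
deterministically given $C$. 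The identity $G=\Delta-\Reg_T$, combined with the assumed bound $\E[\Reg_T]\le \widetilde O(k^ad^b)=o(\Delta)$ and Markov's inequality, gives $G\ge\Delta/2$ with probability close to $1$. If needed, we amplify with independent runs over disjoint random relabelings.

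\textbf{Null side.} Each $y_t$ is a fresh unbiased sign independent of $x_t$, since no hyperedge is queried twice. Therefore $G$ is a supermartingale: the linear part has mean zero and the quadratic part is nonpositive in expectation. Its increments are $2c\eta y_t x_{t,S}$ with $x_t\in B_2^d$. Within one block the coordinates $S$ are distinct, so the sum of squared increments is $O(1)$ per block. Freedman's inequality then gives $G=O(\sqrt{\#\text{blocks}}\,\log N)$ with high probability. Thus the test errs with vanishing probability as long as $\sqrt{\#\text{blocks}}\ll\Delta$.

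\textbf{Exponent bookkeeping (the main obstacle).} Take $q=N^\beta$ with $\beta\uparrow1/2$, $d\approx N^{s}$, and $k\approx\#\text{clique blocks}\approx q\cdot(\text{repetition})$. We must choose $r$, $s$ and the block multiplicities, rationally in terms of $\theta$, so that three things hold:
\begin{itemize}
\item $k=\Theta(d^\theta)$;
\item $\Delta= (k\sqrt d)^{1-o(1)}$, which is exactly the exponent $\theta+1/2$ in units of $\log d$ once $\beta\to1/2$;
\item the null fluctuation $\sqrt{\#\text{blocks}}$ stays below $\Delta$.
\end{itemize}
As a sanity check, with single-vertex blocks and $s=r-1$ one gets $\Delta\approx q^{(r+1)/2}$ and $d\approx N^{r-1}$, matching $\theta a+b<\theta+1/2$ exactly at $\beta=1/2$. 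Small $\theta$ is handled by large $r$. Large $\theta$ requires blocks indexed by $(r-s)$-subsets of $C$, to multiply the number of clique blocks, with $s$ small; the fixed order $r\ge4$ gives enough room. I expect this step to be the delicate one. The ordering and disjointness rules must keep queries fresh, the number of clique blocks must concentrate, and non-clique blocks must not inflate the null variance. Given $\theta a+b<\theta+1/2$, we then pick $\beta<1/2$ close enough to $1/2$ that $k^ad^b\,\mathrm{polylog}=o(\Delta)$. The resulting test contradicts Conjecture~\ref{conj:hpc}. The closing ``in particular'' clause follows by sending $\theta\to\infty$, which forces $a\ge1$, or $\theta\to0$, which forces $b\ge1/2$.
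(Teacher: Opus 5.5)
\textbf{There is a genuine gap.} It is structural rather than a bookkeeping slip: your construction cannot reach the exponent $\theta+\tfrac12$.

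\emph{Why your construction falls short.} Each (block, coordinate) pair is queried exactly once. So on a clique block the comparator earns $\sum_S c(2\eta u_S-u_S^2)$ over the forced coordinates $S\subseteq C$, which under $\norm{u}_2\le1$ is at most $2c\eta\sqrt M=O(q^{s/2})$. As $\beta\to\tfrac12$ you have $d\approx N^s\approx q^{2s}$. This gain is therefore only $d^{1/4}$ per block, giving $\Delta\approx k\,d^{1/4}$ rather than $k\sqrt d$.

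Your sanity check is miscomputed. For $s=r-1$ you correctly get $\Delta\approx q^{(r+1)/2}$, but $k\sqrt d\approx q\cdot q^{r-1}=q^r$. With $\theta=1/(2(r-1))$, the requirement $k^ad^b=o(\Delta)$ becomes $a+2(r-1)b<(r+1)/2$, i.e.\ $\theta a+b<\theta+\tfrac14$. For general $s$ you have $k\approx q^{r-s}$, $\theta=(r-s)/(2s)$ and $\Delta\approx\binom{q}{r}/\sqrt{\binom{q}{s}}\approx q^{r-s/2}$, which again gives $\theta+\tfrac14$.

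There is no easy repair within your design. Shrinking the ground set of the coordinates does not help: one checks $\sqrt M\lesssim d^{1/4}$ whenever $q\le\sqrt N$. Repeating blocks re-queries hyperedges, which destroys the null analysis. So your route proves only the weaker separation, and the final clause would give $b<\tfrac14$ instead of $b<\tfrac12$.

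\emph{The missing idea} is the paper's third group of hyperedge indices.
\begin{itemize}
\item Besides $p$ block indices and $s$ coordinate indices (from disjoint parts $B_a$ and $D_j$ with $|D_j|=q^2$), the paper uses $h=\lceil s/2\rceil+1$ \emph{repetition} indices from parts $W_1,\dots,W_h$ of size $\Theta(N)$.
\item Within a block, each clique coordinate is then observed $L_W=\Theta(N^h)$ times on distinct fresh hyperedges. On an active block, $M_W\ge(q/2N)^hL_W$ of these are forced.
\item The comparator plays the tiny value $\gamma=(q/2N)^h$ on clique coordinates. Each (active block, clique coordinate) pair earns $\gamma M_W/4-\gamma^2L_W/8\ge\gamma^2L_W/8=\Omega(q^{2h}/N^h)$, which is of order one as $\beta\to\tfrac12$.
\item Meanwhile $\norm{u^{(1)}}_2^2=\gamma^2|\mathcal C_D|\le N^{s/2-h}=o(1)$, so the ball constraint no longer binds.
\item With $|\mathcal C_D|\approx q^{3s}/N^s\approx q^s=\sqrt d$ clique coordinates, this yields $\Delta\approx q^{p+s}=k\sqrt d$.
\item The price is that $\Delta$ is no longer deterministic given $C$. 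The paper handles this with Hoeffding under $\beta(p+3s+2h)>s+h$.
\end{itemize}

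\emph{Two smaller points.}

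First, your Freedman step is wrongly justified. Since $x_t$ changes every round, $\sum_t x_{t,S_t}^2$ over a block need not be $O(1)$. Instead, let the quadratic penalty absorb the variance. With $b_t=x_{t,S_t}$ and $g_t=c(2\eta y_tb_t-b_t^2)$, one has $\E_0[e^{\lambda g_t}\mid\mathcal H_t]=e^{-\lambda cb_t^2}\cosh(2\lambda c\eta b_t)\le1$ for $\lambda\le1/(2c\eta^2)$. This gives a horizon-free tail bound, as in Lemma~\ref{lem:4hpc-null}.

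Second, Markov's inequality cannot turn $\E[\Reg]=o(\Delta)$ into $G\ge\Delta/2$ with high probability, because regret can be negative. Averaging polynomially many independent runs and applying Hoeffding fixes this, as in Lemma~\ref{lem:4hpc-detection}.
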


The proof of Theorem~\ref{thm:r-hpc-lower-bound} is deferred to Appendix~\ref{app:generalized_hardness}. To expose the main ideas of the reduction with minimal notation, we first prove the case $k=\Theta(d)$, corresponding to $\theta=1$, under the weaker assumption of $4$-HPC alone. This rules out every polynomial-time expected-regret guarantee of the form $\widetilde{\mathcal O}(k^a d^b)$ with $a+b<3/2$ in this regime.

\begin{theorem}[Computational lower bound under $4$-HPC]
\label{thm:4hpc-lower-bound}
Assume Conjecture~\ref{conj:hpc} for $r=4$. For every fixed $a,b\ge0$ with
\(
a+b<\frac32,
\)
no randomized polynomial-time OCO algorithm can guarantee
\[
\E[\Reg_T(\mathcal A;2,k)]=\widetilde{\mathcal O}(k^a d^b)
\]
uniformly over instances with $k=\Theta(d)$.
This holds even for bounded $1$-exp-concave rank-one quadratic
losses on $B_2^d$, with $T=\operatorname{poly}(d)$.
\end{theorem}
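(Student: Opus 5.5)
The plan is to reduce $4$-HPC detection to OCO on $B_2^d$ with $d=N$ coordinates indexed by the vertices. Let $H$ be a $4$-uniform hypergraph on $[N]$. The losses come in blocks indexed by unordered pairs $\{v,w\}$. Inside block $\{v,w\}$ there is one round for each pair $\{k,l\}$ disjoint from $\{v,w\}$, with one restriction: we fix an ordering of pairs and keep only rounds with $\{k,l\}$ after $\{v,w\}$. This way every $4$-set appears in at most one round (in fact at most three times across the three ways to split it, and we keep one of these). On such a round set
\[
a_t=(e_k+e_l)/\sqrt2,\qquad y_t=2\cdot\1\{\{v,w,k,l\}\in H\}-1,\qquad z=\langle a_t,x\rangle,
\]
and use the rank-one quadratic
\[
f_t(x)=c_0\bigl(\tfrac12 z^2-y_t z\bigr)
\]
for a small absolute constant $c_0$. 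Since $|z|\le1$ on $B_2^d$, this loss is bounded and $1$-exp-concave, which is a routine one-dimensional check. This gives $T=O(N^4)=\operatorname{poly}(d)$, and we declare $k:=N=d$, so that $k=\Theta(d)$. The test runs $\mathcal A$ on this stream, computes the observable gain $G=\sum_t(f_t(0)-f_t(x_t))$, and outputs ``planted'' iff $G$ exceeds a threshold $\tau$.

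\textbf{Null model.} I will show that $G\le O(\log N)$ with high probability for every learner. The key observation is that the labels $y_t$ are independent unbiased signs, each used only once, and $x_t$ is chosen before $y_t$ is revealed. Writing $z_t=\langle a_t,x_t\rangle$, we have $G=c_0\sum_t(y_tz_t-\tfrac12z_t^2)$. The process $\exp\bigl(\sum_t(y_tz_t-\tfrac12 z_t^2)\bigr)$ is a supermartingale, since $\E[e^{yz}]=\cosh z\le e^{z^2/2}$. Markov's inequality then gives $\Pr_0[G>c_0\log(1/\delta)]\le\delta$, uniformly over randomized learners; internal randomness is independent of future labels. This step handles the statistical side cleanly and needs no union bound over the learner's strategies.

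\textbf{Planted model.} Take the comparator $u=c\,\1_C/\sqrt q$ with $c=q^{3/2}/N\le1$. It plays $u$ on blocks $\{v,w\}\subseteq C$ and $0$ elsewhere. It uses $2$ points and makes $O(q^2)\le N=k$ switches, so it lies in $\mathcal U_{2,k}$ because $\beta<1/2$. In such a block the rounds contribute as follows.
\begin{itemize}
\item The $\approx q^2/2$ rounds with $\{k,l\}\subseteq C$ have $y_t=+1$ and $z=\sqrt2\,c/\sqrt q$, each gaining $\asymp c/\sqrt q$, for a total of $\asymp c\,q^{3/2}$.
\item The $\approx qN$ rounds with exactly one endpoint in $C$ have random signs. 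Their quadratic penalty totals $\asymp c^2N$, and their linear part is a sum of independent signs with fluctuation $\widetilde O(c\sqrt N)$ by Hoeffding.
\item All other rounds have $z=0$.
\end{itemize}
Hence each block gains $\asymp cq^{3/2}-c^2N\asymp q^3/N$ for our choice of $c$. Summing over the $\asymp q^2$ blocks, and controlling the total sign fluctuation by a single Hoeffding bound of order $\widetilde O(qc\sqrt N)$, we get
\[
\Delta\gtrsim q^5/N=N^{5\beta-1}
\]
with high probability. The identity $G=\Delta-\Reg_T(u_{1:T})$ and the assumed guarantee $\E\Reg\le\widetilde O(N^{a+b})$ then give $\E_1 G\ge N^{5\beta-1}-\widetilde O(N^{a+b})$.

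\textbf{Conclusion and main obstacle.} Since $a+b<3/2$, we can pick $\beta<1/2$ with $5\beta-1>a+b$, so that $\E_1G\ge N^{5\beta-1}/2$. Converting this expectation bound into a high-probability bound needs care, and I expect it to be the most delicate step. I would use that $G\le\Delta^\star:=\max_{u}\sum_t(f_t(0)-f_t(u))\le\mathrm{poly}(N)$ deterministically, together with a lower bound on $G$ of the form $G\ge\Delta-\Reg$ where $\Reg\ge0$ has small mean. By Markov's inequality, $\Reg\le N^{5\beta-1}/2$ with probability at least $1-o(1)$ once there is a polynomial margin; if necessary, I would widen that margin by slightly decreasing $a+b$ or choosing a larger $\beta$. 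On that event $G\ge N^{5\beta-1}/2\gg\log N$. Taking $\tau=\log^2N$ then separates the two hypotheses with probability tending to one. This is a polynomial-time test, contradicting $4$-HPC, so no such $\widetilde O(k^ad^b)$ guarantee exists. The remaining checks are that the removed duplicate rounds cost only a constant fraction of the clique rounds, and that block ordering keeps each $y_t$ fresh.
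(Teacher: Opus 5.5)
The gap is in the last step. You turn $\E_1G\ge\Delta-\E[\Reg]$ into a high-probability statement by applying Markov's inequality to $\Reg$, but that requires $\Reg\ge0$, and tracking regret is not nonnegative. $\Reg_T(\mathcal A;2,k)$ is measured against the best path in $\mathcal U_{2,k}$, whereas the learner's decisions are unconstrained (they may use $T$ distinct points). So on some runs the learner can beat every such path by a wide margin. The assumed guarantee controls only the mean of $G$ on each instance, not its spread. The deterministic bound $G=O(T)$ together with $\E_1G\ge 3S/4$ yields only $\Pr_1[G\ge S/4]\gtrsim S/T$, which is inverse-polynomial. Hence a single run with a threshold is not shown to succeed with probability tending to one. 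A test with merely constant success probability on planted instances does not contradict Conjecture~\ref{conj:hpc} as stated.

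The missing idea is amplification over the learner's internal randomness. Run $J=\mathrm{poly}(N)$ independent copies on the same loss sequence and threshold the average $\overline G$. On the planted event, Hoeffding with $|G^{(j)}|=O(T)$ gives $\overline G\ge S/2$ with high probability. Under the null, convexity of the exponential preserves your supermartingale bound, $\E_0[e^{\overline G/c_0}]\le 1$. This is the paper's Lemma~\ref{lem:4hpc-detection}; thresholding $\max_j G^{(j)}$ with a union bound under the null also works.

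Apart from this, your reduction is a valid alternative to the paper's. The paper uses a $4$-partite split with $d=|V_3|=q^2$, single-coordinate losses, and comparator $\frac{q}{2N}\1_{C_3}$. This gives $\Delta\asymp q^7/N^2=d^{7/2-1/\beta}$ with $k\asymp d$, and it needs $\beta>1/3$ and concentration of the clique intersections. You instead take $d=N$, index blocks and rounds by vertex pairs, spread the comparator over all of $C$, and declare $k=N$ even though only $O(q^2)$ switches are used. This gives $\Delta\asymp q^5/N=d^{5\beta-1}$. Both exponents tend to $3/2$ as $\beta\to1/2$, and yours needs neither intersection concentration nor $\beta>1/3$.

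Two details still need fixing. First, the pair-ordering rule by itself places every $4$-set in three rounds, one per split into two pairs. Under the null a learner could then memorize a label and replay it, which breaks the supermartingale bound. You must fix one canonical split for every $4$-set, e.g.\ $a<b<c<d\mapsto$ block $\{a,b\}$, round $\{c,d\}$. Second, with $c=q^{3/2}/N$ your linear gain and quadratic penalty have the same order, so the constants must be checked. They do work out for any canonical split. The $\binom q4$ forced rounds gain about $\sqrt2\,c_0q/N$ each, while at most $\binom q3 N$ penalized rounds lose $c_0q^2/(4N^2)$ each, a ratio asymptotically at least $\sqrt2$.
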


We first describe the reduction and establish three lemmas,
then combine them to prove Theorem~\ref{thm:4hpc-lower-bound}.
Throughout, fix $\beta\in(1/3,1/2)$,
$q=\lfloor N^\beta\rfloor$, and $d=q^2$.
\paragraph{Reduction construction.}
We reduce $4$-HPC with planted-clique size $q$ to OCO. The four hyperedge indices play different roles: two determine the comparator's switching pattern, one indexes the OCO coordinates, and the fourth provides repeated observations of each coordinate. Partition $[N]$ deterministically as
\[
[N]=V_1\sqcup V_2\sqcup V_3\sqcup V_4,
\]
with $|V_3|=d$ and the remaining vertices split evenly among $V_1,V_2,V_4$. Since $d=o(N)$, we have $|V_1|,|V_2|,|V_4|=\Theta(N)$. Identify the coordinates of $\mathbb R^d$ with $V_3$. For each $(i,j)\in V_1\times V_2$, create one block containing one round for every $(\ell,v)\in V_3\times V_4$. On this round let
\[
Y_{ij\ell v}
=
\begin{cases}
+1, & \text{if } \{i,j,\ell,v\}\in E(H),\\
-1, & \text{otherwise},
\end{cases}
\qquad
f_{ij\ell v}(x)
=
\frac18(x_\ell-Y_{ij\ell v})^2.
\]
Thus $T=\Theta(N^3d)$, and every queried hyperedge is distinct. Moreover,
\[
\nabla f(x)=\frac{x_\ell-Y}{4}e_\ell,
\qquad
\nabla^2f(x)=\frac14e_\ell e_\ell^\top,
\]
so
\[
\nabla^2f(x)-\nabla f(x)\nabla f(x)^\top
=
\frac{4-(x_\ell-Y)^2}{16}e_\ell e_\ell^\top
\succeq0.
\]
Hence the losses are $1$-exp-concave on $B_2^d$, bounded in $[0,1/2]$, with gradient norm at most $1/2$.

\paragraph{Comparator and gain.}
Under the planted model, suppose $H\sim\mathsf H^{(4)}_1(N,q)$ with planted clique $C$, and write
$C_s=C\cap V_s$ and $q_s=|C_s|$. Define
\[
u^{(0)}=0,
\qquad
u^{(1)}=\gamma\mathbf 1_{C_3},
\qquad
\gamma=\frac{q}{2N}.
\]
Here $\mathbf 1_{C_3}\in\mathbb R^d$ is the indicator vector
with coordinate $1$ for $\ell\in C_3$ and $0$ otherwise, so
$u^{(1)}$ predicts $\gamma$ on clique coordinates and zero elsewhere.
Both comparator points lie in $B_2^d$, since
\[
    \|u^{(1)}\|_2
    =\gamma\sqrt{q_3}
    \le\gamma\sqrt d
    =\frac{q^2}{2N}<1.
\]
Construct a comparator that takes the value $u^{(1)}$ on block $(i,j)$ whenever $(i,j)\in C_1\times C_2$, and takes the value $u^{(0)}$ otherwise.
Figure~\ref{fig:active-block} illustrates one such active block
and its comparator.

\begin{figure}[!htb]
\floatconts
{fig:active-block}
{\caption{An active block $(i,j)\in C_1\times C_2$. Each cell
$(\ell,v)$ is one round with loss
$f_{ij\ell v}(x)=\frac18(x_\ell-Y_{ij\ell v})^2$.
The planted clique forces $+1$ labels on $C_3\times C_4$;
the remaining labels are independent fair signs conditional on
the clique. The fixed comparator vector (right) predicts $\gamma$
on $C_3$ coordinates and zero elsewhere throughout the block.
Hidden clique rows and columns are grouped only for visualization.
See Figure~\ref{fig:planted-block} in
Appendix~\ref{app:construction-illustration} for the full hypergraph-to-loss
construction.}}
{\includegraphics[width=0.9\linewidth]{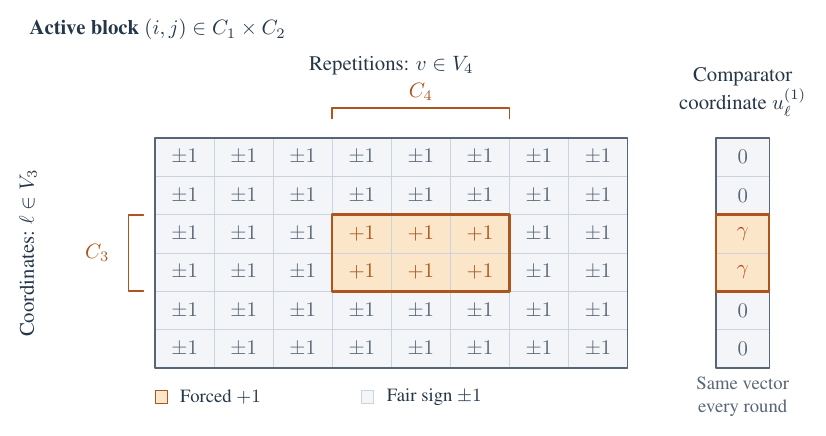}}
\end{figure}

Define the comparator's advantage and the learner's observable gain by
\[
\Delta=\sum_{t=1}^T(f_t(0)-f_t(u_t)),\qquad
G_t=\sum_{s=1}^t(f_s(0)-f_s(x_s)),\qquad G=G_T.
\]
When applied to the learner's gain, $\mathbb E_i$ and
$\Pr_i$ also include the learner's internal randomness. 

We establish three lemmas used in the proof of
Theorem~\ref{thm:4hpc-lower-bound}. We first bound the comparator's number of switches and its advantage over always playing the origin.

\begin{lemma}
\label{lem:4hpc-signal}
Under $H\sim\mathsf H^{(4)}_1(N,q)$, with probability $1-o_N(1)$, the above comparator $u_{1:T}$ makes $O(q^2)$ switches, and satisfies
\[
\Delta
=
\sum_{t=1}^T\bigl(f_t(0)-f_t(u_t)\bigr)
\ge
c\frac{q^7}{N^2} =: S_N
\]
for some constant $c>0$.
\end{lemma}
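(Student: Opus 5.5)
First, I will fix the sizes of the clique pieces. Conditional on $|C|=q$, the counts $q_s=|C\cap V_s|$ are hypergeometric with means $q|V_s|/N$. This gives $\E q_3=qd/N=q^3/N$, which tends to infinity because $\beta>1/3$, and $\E q_s=\Theta(q)$ for $s\in\{1,2,4\}$. By hypergeometric concentration (Chernoff bounds apply to sampling without replacement), with probability $1-o_N(1)$ we have $q_1,q_2,q_4=\Theta(q)$ and $q_3=\Theta(q^3/N)$. All later estimates are made on this event.

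\textbf{Switches.} Order the blocks lexicographically by $(i,j)$. The comparator is constant within each block. It can change only at a block boundary, and only where membership in $C_1\times C_2$ changes. There are $q_1q_2$ active blocks, and each contributes at most two switches, so $k(u_{1:T})\le 2q_1q_2=O(q^2)$. The same bound holds for any block ordering.

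\textbf{Advantage.} Off the active blocks, $u_t=0$ and the contribution is zero. On an active block with round $(\ell,v)$ and $\ell\notin C_3$, the played coordinate is $0$, so again the contribution is zero. For $\ell\in C_3$, a direct expansion gives
\[
f(0)-f(u)=\tfrac18\bigl(1-(\gamma-Y)^2\bigr)=\tfrac18\bigl(2\gamma Y-\gamma^2\bigr).
\]
Hence
\[
\Delta=\frac{\gamma}{4}\sum_{(i,j)\in C_1\times C_2}\sum_{\ell\in C_3}\sum_{v\in V_4}Y_{ij\ell v}-\frac{\gamma^2}{8}\,q_1q_2q_3|V_4|.
\]
I split the inner sum over $v$ into $v\in C_4$ and $v\notin C_4$.

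For $v\in C_4$, the hyperedge $\{i,j,\ell,v\}$ lies in $C$, so $Y=+1$. These terms contribute $q_1q_2q_3q_4$. For $v\notin C_4$, the hyperedge is not contained in $C$, so the $Y$'s are independent fair signs conditional on $C$. Their sum has mean zero and standard deviation $\sqrt{q_1q_2q_3|V_4|}$. By Chebyshev (or Hoeffding), with probability $1-o(1)$ this sum is at least $-\omega\sqrt{q_1q_2q_3N}$ for any slowly growing $\omega$.

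Substituting $\gamma=q/(2N)$ and the sizes from the first step:

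\begin{itemize}
\item The signal term is $\tfrac{\gamma}{4}q_1q_2q_3q_4=\Theta\bigl(\tfrac{q}{N}\cdot q^3\cdot\tfrac{q^3}{N}\bigr)=\Theta(q^7/N^2)$.
\item The penalty term is $\tfrac{\gamma^2}{8}q_1q_2q_3|V_4|=\Theta\bigl(\tfrac{q^2}{N^2}\cdot q^2\cdot\tfrac{q^3}{N}\cdot N\bigr)=\Theta(q^7/N^2)$, with constant $\tfrac18\cdot\tfrac14\cdot\tfrac13=\tfrac1{96}$ relative to $q_1q_2q_3N$. The signal has constant $\tfrac18$ relative to $q_1q_2q_3q_4N/q$, using $q_4\approx q/3$, so about $\tfrac1{24}$. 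The signal therefore dominates by a constant factor.
\item The noise term is $\gamma\sqrt{q_1q_2q_3N}=\Theta\bigl(\tfrac{q}{N}\sqrt{q^5}\bigr)=\Theta(q^{7/2}/N)$.
\end{itemize}

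The noise is $o(q^7/N^2)$ exactly when $q^{7/2}\gg N$, that is $\beta>2/7$. This holds since $\beta>1/3$. A union bound over the three high-probability events then gives $\Delta\ge c\,q^7/N^2$.

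\textbf{Main obstacle.} The delicate step is the constant bookkeeping, which must show that the quadratic penalty $\gamma^2$ does not cancel the signal. The sharp hypergeometric estimates $q_4\approx|V_4|q/N$ and $q_3\approx dq/N$ make the ratio of signal to penalty exactly $\tfrac{q_4}{\gamma|V_4|}\cdot\tfrac12\approx 4$. This ratio exceeds $1$ and matches the computation above, leaving a positive constant $c$. If the constants came out tighter than this, I would instead shrink $\gamma$ by a constant factor; that does not affect the $q^7/N^2$ order.
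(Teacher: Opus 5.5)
Your proposal is correct and follows essentially the same route as the paper: hypergeometric concentration of the $q_s$, the $2q_1q_2$ switch bound, the split of $v\in C_4$ (forced $+1$) from $v\notin C_4$ (independent fair signs), and a concentration bound on the unforced part (you use Chebyshev where the paper uses Hoeffding, which is immaterial). One slip to fix: the signal-to-penalty ratio is $2q_4/(\gamma|V_4|)\approx 4$, not $\tfrac12 q_4/(\gamma|V_4|)$, and for it to exceed $1$ you need $q_4>\tfrac14\mathbb E q_4$, which your Chernoff step supplies but the bare statement $q_4=\Theta(q)$ does not (the paper uses $q_4\ge\tfrac12\mathbb E q_4=\gamma|V_4|$ to get $\mu_C\ge M_C\gamma^2|V_4|/8$).
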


\begin{proof}[Proof sketch]
Only rounds with $(i,j,\ell)\in C_1\times C_2\times C_3$
contribute to $\Delta$. On every other round, the comparator
predicts zero on the queried coordinate, so
$f_t(u_t)=f_t(0)=1/8$ and the gain is zero.
On a contributing round, the comparator predicts $\gamma$.
Splitting according to whether the fourth vertex lies in the
clique, we obtain
\[
\begin{aligned}
    f_{ij\ell v}(0)-f_{ij\ell v}(u^{(1)})
    &=\frac18\left(Y_{ij\ell v}^2
                  -(\gamma-Y_{ij\ell v})^2\right)=\begin{cases}
        \displaystyle\frac{\gamma}{4}-\frac{\gamma^2}{8},
            &v\in C_4,\\[4pt]
        \displaystyle\frac{\gamma Y_{ij\ell v}}4-\frac{\gamma^2}{8},
            &v\in V_4\setminus C_4.
      \end{cases}
\end{aligned}
\]
Conditional on $C$, the first case is deterministic, while the
labels in the second case are independent unbiased signs.
The quadratic penalty is paid in both cases. Collecting all
deterministic terms therefore gives $\Delta = \mu_C + Z_C$ where
\[
\begin{aligned}
    \mu_C:=\mathbb E[\Delta\mid C]
    &=q_1q_2q_3\left(\frac{\gamma q_4}{4}
                    -\frac{\gamma^2|V_4|}{8}\right), \qquad Z_C =\frac{\gamma}{4}
        \sum_{\substack{i\in C_1,\;j\in C_2,\;\ell\in C_3\\
                        v\in V_4\setminus C_4}}
        Y_{ij\ell v}.
\end{aligned}
\]

With probability $1-o_N(1)$, the clique intersections concentrate
around their means $q|V_s|/N$: in particular,
$q_1,q_2,q_4=\Theta(q)$, $q_3=\Theta(q^3/N)$, and
$q_4\ge q|V_4|/(2N)=\gamma|V_4|$.
Condition on such a clique. Since $\gamma=q/(2N)$ and
$|V_4|=\Theta(N)$, the deterministic contribution satisfies
\[
    \mu_C\ge q_1q_2q_3\frac{\gamma^2|V_4|}{8}
    =\Omega\!\left(q^2\cdot\frac{q^3}{N}\cdot\frac{q^2}{N}\right)
    =\Omega\!\left(\frac{q^7}{N^2}\right).
\]
Thus the positive contribution from forced hyperedges outweighs
the quadratic penalties on all contributing rounds. The remaining term $Z_C$ is a sum of
\[
    q_1q_2q_3(|V_4|-q_4)
    =O\!\left(q^2\cdot\frac{q^3}{N}\cdot N\right)
    =O(q^5)
\]
independent, mean-zero random variables, each of magnitude
$\gamma/4$. Its conditional standard deviation is therefore
$O(\gamma\sqrt{q^5})=O(q^{7/2}/N)=o(\mu_C)$.
A standard Hoeffding bound therefore gives
$\Delta\ge\mu_C/2=\Omega(q^7/N^2)$ with high probability.

Finally, there are $q_1q_2=\Theta(q^2)$ active blocks, throughout
which the comparator stays at $u^{(1)}$. Each active block
contributes at most two switches: one on entry and one on exit.
Consecutive active blocks require no switch between them, so
the total number of switches is $O(q^2)$. Combining these bounds
with the concentration of the clique intersections proves the
claim. See Appendix~\ref{app:4hpc-signal-proof} for the detailed proof.
\end{proof}

The next two lemmas apply to both the present reduction and
its generalization in Appendix~\ref{app:generalized_hardness}.
Under the null model, each label remains an unbiased sign after
the learner chooses its decision. The following lemma shows
that large positive gains over the origin are exponentially
unlikely, with a bound independent of the horizon. Its proof
appears in Appendix~\ref{app:4hpc-null-proof}.

\begin{lemma}
\label{lem:4hpc-null}
Consider losses $f_t(x)=\frac18(x_{\ell_t}-Y_t)^2$ on $B_2^d$, where each coordinate $\ell_t$ is fixed before observing $Y_t$. If, under the null model, $Y_t$ is a fresh unbiased sign conditional on the learner's history and current decision, then every randomized learner satisfies
\[
\mathbb E_0[e^G]\le1,
\qquad
\Pr_0(G\ge z)\le e^{-z}.
\]
\end{lemma}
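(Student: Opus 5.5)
The plan is to show that $M_t := e^{G_t}$ is a nonnegative supermartingale under the null model, with respect to the filtration generated by the learner's history, its internal randomness, and the decision $x_t$ together with the queried coordinate $\ell_t$. Since $G_0=0$, this gives $\mathbb E_0[e^{G}]=\mathbb E_0[M_T]\le M_0=1$. Markov's inequality applied to $e^{G}$ then yields $\Pr_0(G\ge z)=\Pr_0(e^G\ge e^z)\le e^{-z}$. Neither bound involves $T$, so the conclusion is horizon-independent, as claimed.

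The key step is the one-round computation. Write $x=(x_t)_{\ell_t}$ for the learner's prediction on the queried coordinate. Since $x_t\in B_2^d$, we have $|x|\le 1$. The per-round gain is
\[
f_t(0)-f_t(x_t)=\tfrac18\bigl(Y_t^2-(x-Y_t)^2\bigr)=\tfrac18\bigl(2xY_t-x^2\bigr)=\tfrac{xY_t}{4}-\tfrac{x^2}{8},
\]
using $Y_t^2=1$. Conditional on the past and on $(x_t,\ell_t)$, the hypothesis says $Y_t$ is a fresh unbiased sign, so
\[
\mathbb E\bigl[e^{\frac{xY_t}{4}-\frac{x^2}{8}}\,\big|\,\mathcal F_{t-1},x_t\bigr]=e^{-x^2/8}\cosh(x/4)\le e^{-x^2/8}e^{x^2/32}\le 1,
\]
using the standard bound $\cosh(s)\le e^{s^2/2}$. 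This gives $\mathbb E[M_t\mid\mathcal F_{t-1},x_t]\le M_{t-1}$. Applying the tower property over $x_t$, which may be randomized, preserves the inequality, and iterating over $t=1,\dots,T$ gives $\mathbb E_0[M_T]\le1$.

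The only delicate point is measurability, namely making sure the conditioning is legitimate. The learner's decision $x_t$ and the coordinate $\ell_t$ must both be determined before $Y_t$ is revealed, and $Y_t$ must remain uniform given all of this. In our reduction this holds because every queried hyperedge is distinct, the coordinate order is fixed in advance, and under $\mathsf H^{(4)}_0$ all edge indicators are independent fair coins. I expect this check to be the main thing to state carefully. The quadratic penalty $-x^2/8$ dominates the sub-Gaussian term $x^2/32$ by a wide margin, so the constants themselves pose no difficulty.
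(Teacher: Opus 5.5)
Your proposal is correct and follows essentially the same route as the paper: it uses the one-round identity $g_t=b_tY_t/4-b_t^2/8$ and the bound $\mathbb E_0[e^{g_t}\mid\mathcal H_t]=e^{-b_t^2/8}\cosh(b_t/4)\le e^{-3b_t^2/32}\le 1$ via $\cosh(z)\le e^{z^2/2}$, then iterates with the tower property over the adaptive, randomized history and finishes with Markov's inequality on $e^G$. The bound $|x|\le 1$ you mention is not needed, since the exponential-moment inequality holds for every real prediction.
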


The identity $G=\Delta-\Reg_T(\mathcal A;u_{1:T})$ shows
that sufficiently small expected regret yields large expected
learner gain on planted instances with substantial comparator
advantage. The next lemma averages independent learner runs on
the same loss sequence to turn this separation into a reliable
polynomial-time test. Its proof appears in
Appendix~\ref{app:4hpc-detection-proof}.

\begin{lemma}
\label{lem:4hpc-detection}
Suppose a polynomial-time reduction produces $T=\operatorname{poly}(N)$
losses in $[0,1/2]$, and let $S_N\to\infty$ be a known threshold.
Assume $\mathbb E_0[e^G]\le1$ under the null model and that,
with probability $1-o_N(1)$ under the planted model, a comparator
has advantage $\Delta\ge S_N$.
A randomized polynomial-time learner with expected regret at most
$S_N/4$ against that comparator on every instance in this event
yields a randomized polynomial-time test distinguishing the models
with probability $1-o_N(1)$.
\end{lemma}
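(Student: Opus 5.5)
The plan is to build the test directly from the learner's observable gain: run $R$ independent copies of the learner on the same loss sequence, average their gains, and compare the average with a threshold of order $S_N$. Given $H$, the test computes the reduction's losses $f_{1:T}$ in polynomial time. It then runs $R=O(\log N)$ independent copies of the learner with fresh internal randomness, each on the full sequence $f_{1:T}$. For each run it records $G^{(i)}=\sum_t(f_t(0)-f_t(x^{(i)}_t))$, which is observable because the losses are known. The test outputs ``\textsf{planted}'' if the average $\bar G=\frac1R\sum_i G^{(i)}$ is at least $S_N/2$, and ``\textsf{null}'' otherwise. Since $T=\operatorname{poly}(N)$ and the learner runs in polynomial time, the whole test is polynomial.

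\textbf{Null case.} Here the plan is to show that every individual run is unlikely to have a large gain. Markov's inequality applied to $e^{G}$, using $\mathbb E_0[e^G]\le1$, gives $\Pr_0(G^{(i)}\ge S_N/4)\le e^{-S_N/4}$. A union bound over the $R$ runs then yields $\Pr_0(\max_i G^{(i)}\ge S_N/4)\le Re^{-S_N/4}$. This tends to $0$ because $S_N\to\infty$, provided $R$ grows slowly; $R=\lceil\log N\rceil$ suffices whenever $S_N\gg\log\log N$. To avoid any such condition, I would instead choose $R$ as a function of $S_N$, for example $R=\lceil e^{S_N/8}\rceil\wedge\lceil\log N\rceil$. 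On the complement of this event, $\bar G<S_N/4<S_N/2$, so the test outputs ``\textsf{null}''.

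\textbf{Planted case.} Condition on $H$ in the good event where the comparator $u_{1:T}$ satisfies $\Delta\ge S_N$. On this event the identity $G=\Delta-\Reg_T(\mathcal A;u_{1:T})$ holds, and the hypothesis $\mathbb E[\Reg_T]\le S_N/4$ gives $\mathbb E[G\mid H]\ge 3S_N/4$. The losses lie in $[0,1/2]$, so $|f_t(0)-f_t(x_t)|\le1/2$ and $|G|\le T/2$. Conditional on $H$, the runs are i.i.d., so Hoeffding's inequality gives
\[
\Pr\bigl(\bar G< S_N/2\mid H\bigr)\le \exp\bigl(-2R(S_N/4)^2/T^2\bigr).
\]

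\textbf{Main obstacle.} This Hoeffding bound only becomes small when $R\gtrsim T^2/S_N^2$, which is still polynomial since $T=\operatorname{poly}(N)$ and $S_N\ge1$ eventually. So I would take $R=\lceil T^2\log N/S_N^2\rceil\cdot O(1)$, which makes the planted error $o(1)$. The difficulty is that this larger $R$ feeds back into the null union bound $Re^{-S_N/4}$, which then needs $S_N\gtrsim\log T=O(\log N)$. If $S_N$ is only known to diverge, I would avoid the union bound altogether and use the averaged bound directly. Independence of the runs given $H$ does not give a product bound under $\Pr_0$, because the runs share $H$. Instead, the plan is to apply Hoelder's or Jensen's inequality to get $\mathbb E_0[e^{\bar G}]\le\frac1R\sum_i\mathbb E_0[e^{G^{(i)}}]\le1$, using convexity of the exponential. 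This gives $\Pr_0(\bar G\ge S_N/2)\le e^{-S_N/2}\to0$ for any $R$.

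With the convexity bound, the null error is controlled independently of $R$, so $R$ can be chosen freely to make the Hoeffding term small. Adding the $o_N(1)$ probability of the bad planted event, both error probabilities are $o_N(1)$.
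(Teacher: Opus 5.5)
Your proposal is correct and, once the union-bound detour is set aside, it is the paper's proof. The paper also runs $J=\lceil 64T^2S_N^{-2}\log N\rceil$ independent copies on the same losses and applies Hoeffding with range $T$ to get planted error $\exp(-JS_N^2/(8T^2))$. It then bounds the null error by $\mathbb E_0[e^{\overline G}]\le 1$ (convexity of the exponential) plus Markov, giving $e^{-S_N/2}$ for any number of runs, so the choice $R=\lceil e^{S_N/8}\rceil\wedge\lceil\log N\rceil$ can be dropped from your write-up.
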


We finally conclude with the proof of Theorem~\ref{thm:4hpc-lower-bound}, which combines the above lemmas
to give the desired lower bound.

\begin{proof}[Proof of Theorem~\ref{thm:4hpc-lower-bound}]
Suppose a randomized polynomial-time learner guarantees
$\widetilde O(k^a d^b)$ expected regret uniformly over instances
with $k=\Theta(d)$, for fixed $a,b\ge0$ with $a+b<3/2$.
Choose $\varepsilon>0$ such that
$a+b<3/2-\varepsilon$, and choose
\[
\beta\in\left(\max\left\{\frac13,\frac1{2+\varepsilon}\right\},\frac12\right).
\]
By Lemma~\ref{lem:4hpc-signal}, on its high-probability event
the comparator uses two points and makes at most $k$ switches,
where $k$ is a sufficiently large constant multiple of $q^2$.
Since $d=q^2$ and $q=\Theta(N^\beta)$, these instances satisfy
$k=\Theta(d)$ and
$T=\Theta(N^3d)=\Theta(d^{1+3/(2\beta)})$.
Thus the assumed guarantee applies and is $\widetilde O(d^{a+b})$.
Moreover, $N=\Theta(d^{1/(2\beta)})$ and $1/\beta<2+\varepsilon$ give
\[
S_N=c\frac{q^7}{N^2}
=\Theta\left(d^{7/2-1/\beta}\right),
\qquad
\frac72-\frac1\beta>\frac32-\varepsilon>a+b.
\]
Consequently, the assumed regret guarantee is $o(S_N)$.
For all sufficiently large $N$, it is at most $S_N/4$.
Lemma~\ref{lem:4hpc-detection} then yields a randomized
polynomial-time test for $4$-HPC, contradicting
Conjecture~\ref{conj:hpc}.
\end{proof}

\section{Conclusion and Future Work}

We studied online convex optimization against comparators that switch at most $k$ times among $m$ distinct decisions. Unrestricted algorithms achieve $\widetilde O(md+k)$ regret, whereas existing efficient methods achieve $\widetilde O(kd)$. Assuming $r$-HPC for every fixed $r\ge4$, we rule out randomized polynomial-time algorithms guaranteeing $\widetilde O(k^a d^b)$ expected regret uniformly for $k=\Theta(d^\theta)$ whenever $\theta>0$ is fixed and rational and $a,b\ge0$ are fixed with $\theta a+b<\theta+1/2$, even for $m=2$.

An immediate question is to determine the optimal polynomial-time regret by closing the gap between our computational lower bound and the $\widetilde O(kd)$ upper bound. A second direction is to characterize the curvature and structural assumptions under which recurrence can be exploited efficiently. Strongly convex losses already admit dimension-free guarantees without exploiting recurrence. Understanding when recurrence yields additional improvements may clarify the computational limits of learning against recurring comparators in online convex optimization.


\section*{AI Disclosure}

GPT-6 Astra was used to explore lower bounds for the tracking-regret problem. In response to a prompt from one of the authors, the model proposed the central idea of a reduction from hypergraphic planted clique. The initial bound was weaker than the one presented here. Through further interaction with the model, the authors refined the argument and strengthened the bound to obtain the result in Theorems~\ref{thm:4hpc-lower-bound} and ~\ref{thm:r-hpc-lower-bound}. The authors verified the correctness of the final result.

OpenAI Codex was also used to help revise the exposition and presentation of proofs. The authors take full responsibility for all content of the paper, including its correctness, originality, attribution, and citations.

\newpage

\bibliographystyle{plainnat}
\begingroup
\setcitestyle{numbers,square}
\bibliography{references}
\endgroup

\appendix
\clearpage
\section{Proofs and Generalization of the Hardness Reduction}
\label{app:hardness-proofs}

We first give the full proofs of Lemmas~\ref{lem:4hpc-signal}--\ref{lem:4hpc-detection},
then extend the construction to prove Theorem~\ref{thm:r-hpc-lower-bound}.

\subsection{Proof of Lemma~\ref{lem:4hpc-signal}}
\label{app:4hpc-signal-proof}

We give an expanded proof of the planted-signal lemma for the
four-uniform construction. The argument has two sources of randomness:
the choice of the planted clique, which determines how many useful
blocks and coordinates there are, and the unforced hyperedges, which
determine the fluctuations of the comparator's gain.

\begin{proof}
Recall that $q=\lfloor N^\beta\rfloor$ with
$\beta\in(1/3,1/2)$, $d=q^2$, and
\[
    |V_3|=d,
    \qquad |V_1|,|V_2|,|V_4|=\Theta(N).
\]
The planted clique $C$ is a uniformly random $q$-subset of $[N]$.
Write $C_s=C\cap V_s$ and $q_s=|C_s|$ for $s\in[4]$.
The comparator uses the two points
\[
    u^{(0)}=0,
    \qquad u^{(1)}=\gamma\mathbf 1_{C_3},
    \qquad \gamma=\frac{q}{2N}.
\]
It plays $u^{(1)}$ throughout block $(i,j)$ if
$(i,j)\in C_1\times C_2$, and plays $u^{(0)}$ throughout every
other block. We call the former blocks active.

\paragraph{Step 1: Concentration of the clique intersections.}
For each $s\in[4]$, the variable $q_s$ is hypergeometric, with mean
\[
    \mu_s:=\mathbb E[q_s]=\frac{q|V_s|}{N}.
\]
Indeed, each vertex of $V_s$ belongs to the uniformly chosen clique
with probability $q/N$. The multiplicative Chernoff bound for
sampling without replacement gives
\[
    \Pr(q_s<\mu_s/2)\le e^{-\mu_s/8},
    \qquad
    \Pr(q_s>2\mu_s)\le e^{-\mu_s/3}.
\]
Define the event
\[
    \mathcal E
    :=\left\{
        \frac{q|V_s|}{2N}\le q_s\le\frac{2q|V_s|}{N}
        \text{ for every }s\in[4]
      \right\}.
\]
A union bound, which does not require the four intersection sizes
to be independent, yields
\[
    \Pr(\mathcal E^c)
    \le 2\sum_{s=1}^4 e^{-\mu_s/8}
    \le 8e^{-\min_s\mu_s/8}
    =o_N(1).
\]
To justify the last equality, the means for $s=1,2,4$ are
$\Theta(q)$, while
\[
    \mu_3=\frac{qd}{N}=\frac{q^3}{N}
    =\Theta(N^{3\beta-1})\longrightarrow\infty.
\]
Thus every mean tends to infinity. This explains the assumption
$\beta>1/3$: it ensures concentration even for the smaller
coordinate part $V_3$.

For the next steps, fix any clique $C$ satisfying $\mathcal E$.
All expectations and probabilities conditional on $C$ are over the
remaining randomness of the hypergraph. On this event,
\[
    q_1,q_2,q_4=\Theta(q),
    \qquad q_3=\Theta\!\left(\frac{q^3}{N}\right).
\]
The constants in these estimates can be chosen uniformly over
all such cliques.

\paragraph{Step 2: Feasibility and number of switches.}
The origin is feasible, and the other point satisfies
\[
    \|u^{(1)}\|_2
    =\gamma\sqrt{q_3}
    \le\gamma\sqrt d
    =\frac{q^2}{2N}
    \longrightarrow0,
\]
where the limit uses $\beta<1/2$. Hence both points lie in
$B_2^d$ for sufficiently large $N$.

There are exactly $q_1q_2=\Theta(q^2)$ active blocks. The
comparator is constant within each block, so a switch can occur
only at a block boundary. Each active block can account for at
most two switches, one on entering it and one on leaving it.
Consecutive active blocks require no switch between them.
Consequently, the total number of switches is at most
$2q_1q_2=O(q^2)$. In fact, this upper bound holds for every
clique, since $q_1,q_2\le q$.

\paragraph{Step 3: Gain for a fixed active block and coordinate.}
Outside active blocks, the comparator equals the origin and its
gain is zero. On an active block, a round with $\ell\notin C_3$
also has zero gain, because $u^{(1)}_\ell=0$ and the loss depends
only on coordinate $\ell$. Thus only triples
$(i,j,\ell)\in C_1\times C_2\times C_3$ contribute to
\[
    \Delta=\sum_{t=1}^T\bigl(f_t(0)-f_t(u_t)\bigr).
\]
Fix one such triple. For every $v\in V_4$, expanding the square
gives
\[
\begin{aligned}
    f_{ij\ell v}(0)-f_{ij\ell v}(u^{(1)})
    &=\frac18\left(Y_{ij\ell v}^2
                  -(\gamma-Y_{ij\ell v})^2\right)\\
    &=\frac{\gamma Y_{ij\ell v}}4-\frac{\gamma^2}{8}.
\end{aligned}
\]
If $v\in C_4$, all four vertices lie in $C$, so the hyperedge is
forced to be present and $Y_{ij\ell v}=1$. If $v\notin C_4$,
the hyperedge is not forced; conditional on $C$, its label is
an unbiased sign. Therefore
\[
    \mathbb E[Y_{ij\ell v}\mid C]
    =\begin{cases}
        1,&v\in C_4,\\
        0,&v\notin C_4.
      \end{cases}
\]
Summing over the $|V_4|$ rounds for this triple, we obtain
\[
\begin{aligned}
    \mathbb E\!\left[
        \sum_{v\in V_4}
        \bigl(f_{ij\ell v}(0)-f_{ij\ell v}(u^{(1)})\bigr)
        \,\middle|\,C\right]
    &=\frac{\gamma q_4}{4}-\frac{\gamma^2|V_4|}{8}\\
    &\ge\frac{\gamma^2|V_4|}{8}\\
    &=\frac{q^2|V_4|}{32N^2}
      =\Omega\!\left(\frac{q^2}{N}\right).
\end{aligned}
\]
The inequality follows from
$q_4\ge q|V_4|/(2N)=\gamma|V_4|$, which holds on
$\mathcal E$. The positive bias from the forced hyperedges
therefore outweighs the quadratic penalty paid on all rounds.
This is a statement about the sum in expectation; an individual
round with label $-1$ has negative gain.

\paragraph{Step 4: Total expected gain.}
There are
\[
    M_C:=q_1q_2q_3
    =\Theta(q^2)\,\Theta\!\left(\frac{q^3}{N}\right)
    =\Theta\!\left(\frac{q^5}{N}\right)
\]
contributing triples. By linearity of expectation,
\[
\begin{aligned}
    \mu_C:=\mathbb E[\Delta\mid C]
    &=M_C\left(\frac{\gamma q_4}{4}
              -\frac{\gamma^2|V_4|}{8}\right)\\
    &\ge M_C\frac{q^2|V_4|}{32N^2}
     \ge c_0\frac{q^7}{N^2}
\end{aligned}
\]
for a constant $c_0>0$ independent of $N$ and of the clique
satisfying $\mathcal E$. In particular, the power $q^7/N^2$
comes from multiplying the number of active blocks, the number
of clique coordinates, and the expected gain for each such pair:
\[
    \underbrace{\Theta(q^2)}_{\text{active blocks}}
    \;\underbrace{\Theta(q^3/N)}_{\text{clique coordinates}}
    \;\underbrace{\Omega(q^2/N)}_{\text{gain per pair}}
    =\Omega\!\left(\frac{q^7}{N^2}\right).
\]

\paragraph{Step 5: Concentration of the total gain.}
It remains to show that the realized gain is unlikely to fall
far below its conditional mean. Define
\[
    \mathcal I_C
    :=C_1\times C_2\times C_3\times(V_4\setminus C_4),
    \qquad n_C:=|\mathcal I_C|.
\]
Conditional on $C$, both the gain from forced edges and all
quadratic penalty terms are deterministic. Hence
\[
    \Delta-\mu_C
    =\sum_{(i,j,\ell,v)\in\mathcal I_C}
      \frac{\gamma Y_{ij\ell v}}4.
\]
Every summand has mean zero and takes values in
$\{-\gamma/4,\gamma/4\}$. These summands are independent
conditional on $C$: they correspond to unforced hyperedges,
and distinct tuples give distinct hyperedges because the
four partition parts are disjoint. Their number satisfies
\[
    n_C=M_C(|V_4|-q_4)
    \le M_C|V_4|=O(q^5).
\]

When $n_C>0$, Hoeffding's inequality gives, for every $a>0$,
\[
    \Pr(\Delta-\mu_C\le-a\mid C)
    \le\exp\!\left(-\frac{2a^2}{n_C(\gamma/2)^2}\right)
    =\exp\!\left(-\frac{8a^2}{n_C\gamma^2}\right).
\]
The denominator uses the squared range length of each summand,
which is $(\gamma/2)^2$. If $n_C=0$, then
$\Delta=\mu_C$ deterministically and no concentration bound
is needed.

Choose $c=c_0/2$ and set $S_N=cq^7/N^2$. Since
$\mu_C\ge c_0q^7/N^2$, we have $S_N\le\mu_C/2$.
Using $a=\mu_C/2$ in the preceding bound yields
\[
\begin{aligned}
    \Pr(\Delta<S_N\mid C)
    &\le\Pr(\Delta-\mu_C\le-\mu_C/2\mid C)\\
    &\le\exp\!\left(-\frac{2\mu_C^2}{n_C\gamma^2}\right)\\
    &\le\exp\!\left(-\Omega\!\left(
        \frac{(q^7/N^2)^2}{q^5(q^2/N^2)}\right)\right)\\
    &=\exp\!\left(-\Omega\!\left(\frac{q^7}{N^2}\right)\right)
      =o_N(1).
\end{aligned}
\]
Here $\gamma^2=q^2/(4N^2)$, and
$q^7/N^2=\Theta(N^{7\beta-2})\to\infty$ because
$\beta>1/3$. The bound is uniform over cliques satisfying
$\mathcal E$. Averaging over those cliques and adding the
probability of $\mathcal E^c$ therefore gives
\[
    \Pr(\Delta<S_N)
    \le\Pr(\mathcal E^c)
      +\exp\!\left(-\Omega\!\left(\frac{q^7}{N^2}\right)\right)
    =o_N(1).
\]
Together with feasibility and the switch bound, this proves
Lemma~\ref{lem:4hpc-signal}.
\end{proof}

\subsection{Proof of Lemma~\ref{lem:4hpc-null}}
\label{app:4hpc-null-proof}

We expand the exponential-moment argument, including the
conditioning needed for a randomized learner. The key point
is that each new label remains an unbiased sign after the
learner has chosen its current decision, even though that
decision may depend on all earlier labels.

\begin{proof}
Consider the losses
\[
    f_t(x)=\frac18(x_{\ell_t}-Y_t)^2,
\]
and let $\mathcal H_t$ be the information available just before
$Y_t$ is revealed, including the learner's history, its current
decision $x_t$, and the coordinate $\ell_t$. Thus
$b_t:=x_{t,\ell_t}$ is $\mathcal H_t$-measurable. For a
randomized learner, conditioning on $x_t$ includes the outcome
of its random choice for this round; we will average over that
choice when taking unconditional expectations.

Under the hypothesis of the lemma,
\[
    \Pr_0(Y_t=1\mid\mathcal H_t)
    =\Pr_0(Y_t=-1\mid\mathcal H_t)=\frac12.
\]
In our reduction, this property follows from the independence
of hyperedges under the null model and the fact that each round
queries a previously unqueried hyperedge. In particular,
observing past losses does not reveal the next label.

\paragraph{Step 1: Express the gain on one round.}
Since $Y_t^2=1$, the origin has loss $f_t(0)=1/8$. Expanding
the square gives
\[
\begin{aligned}
    g_t:=f_t(0)-f_t(x_t)
    &=\frac18\left(Y_t^2-(b_t-Y_t)^2\right)\\
    &=\frac{b_tY_t}{4}-\frac{b_t^2}{8}.
\end{aligned}
\]
Consequently,
\[
    \mathbb E_0[g_t\mid\mathcal H_t]
    =-\frac{b_t^2}{8}\le0.
\]
The linear term is a mean-zero random reward, while the
quadratic term is a deterministic penalty once the prediction
has been chosen. Nonpositive expected gain alone does not give
the desired exponential upper-tail bound, so we next control
the exponential moment of $g_t$.

\paragraph{Step 2: Bound the conditional exponential moment.}
Averaging over the two equally likely values of $Y_t$ yields
\[
\begin{aligned}
    \mathbb E_0[e^{g_t}\mid\mathcal H_t]
    &=\frac12e^{b_t/4-b_t^2/8}
      +\frac12e^{-b_t/4-b_t^2/8}\\
    &=e^{-b_t^2/8}\cosh(b_t/4),
\end{aligned}
\]
where $\cosh(z)=(e^z+e^{-z})/2$. For every real $z$,
\[
    \cosh(z)
    =\sum_{j=0}^{\infty}\frac{z^{2j}}{(2j)!}
    \le\sum_{j=0}^{\infty}\frac{(z^2/2)^j}{j!}
    =e^{z^2/2}.
\]
The termwise inequality uses $(2j)!\ge2^j j!$ and
$z^{2j}\ge0$. Substituting $z=b_t/4$ gives
\[
    \mathbb E_0[e^{g_t}\mid\mathcal H_t]
    \le e^{-b_t^2/8+b_t^2/32}
    =e^{-3b_t^2/32}
    \le1.
\]
This bound holds for every realized history and every current
decision, so it allows arbitrary adaptive and randomized
choices by the learner.

\paragraph{Step 3: Iterate the conditional bound.}
Define
\[
    G_0=0,
    \qquad G_t=\sum_{s=1}^t g_s,
    \qquad G=G_T.
\]
The accumulated gain $G_{t-1}$ is known given $\mathcal H_t$.
Since $G_t=G_{t-1}+g_t$, the tower property gives
\[
\begin{aligned}
    \mathbb E_0[e^{G_t}]
    &=\mathbb E_0\!\left[
        \mathbb E_0[e^{G_{t-1}}e^{g_t}\mid\mathcal H_t]
      \right]\\
    &=\mathbb E_0\!\left[
        e^{G_{t-1}}\mathbb E_0[e^{g_t}\mid\mathcal H_t]
      \right]\\
    &\le\mathbb E_0[e^{G_{t-1}}].
\end{aligned}
\]
Iterating this inequality from $t=T$ down to $t=1$ yields
\[
    \mathbb E_0[e^G]
    \le\mathbb E_0[e^{G_{T-1}}]
    \le\cdots\le\mathbb E_0[e^{G_0}]=1.
\]
The gains $g_t$ need not be independent, since the learner
can adapt to previous outcomes. Conditioning at each round
is what makes the iteration valid despite this dependence.
All expectations are finite because the losses lie in
$[0,1/2]$ and the horizon is finite.

\paragraph{Step 4: Derive the upper-tail bound.}
The exponential is increasing and $e^G$ is nonnegative.
Markov's inequality therefore gives, for every real $z$,
\[
    \Pr_0(G\ge z)
    =\Pr_0(e^G\ge e^z)
    \le\frac{\mathbb E_0[e^G]}{e^z}
    \le e^{-z}.
\]
In particular, for any $\delta\in(0,1)$, the probability that
the learner gains at least $\log(1/\delta)$ over the origin
is at most $\delta$, independently of the horizon $T$.
This proves both conclusions of Lemma~\ref{lem:4hpc-null}.
\end{proof}

\subsection{Proof of Lemma~\ref{lem:4hpc-detection}}
\label{app:4hpc-detection-proof}

\begin{proof}
Let $\mathcal E_N$ be the planted event in the statement.
For every fixed instance in $\mathcal E_N$, the identity
$G=\Delta-\Reg_T(\mathcal A;u_{1:T})$ gives
$\mathbb E[G]\ge3S_N/4$, where the expectation is over the
learner's randomness. Run
$J=\lceil64T^2S_N^{-2}\log N\rceil$ independent copies on
the same loss sequence, and set
$\overline G=J^{-1}\sum_{j=1}^J G^{(j)}$.
Since each gain lies in $[-T/2,T/2]$, Hoeffding's inequality,
applied to each fixed instance in $\mathcal E_N$, gives
\[
\Pr_1\!\left(\overline G<S_N/2\mid\mathcal E_N\right)
\le \exp\left(-\frac{JS_N^2}{8T^2}\right)
\le N^{-8}.
\]
Under the null model, convexity of the exponential gives
\[
\mathbb E_0[e^{\overline G}]
\le\frac1J\sum_{j=1}^J\mathbb E_0[e^{G^{(j)}}]\le1,
\qquad
\Pr_0(\overline G\ge S_N/2)\le e^{-S_N/2}=o_N(1),
\]
where the probability bound follows from Markov's inequality.
Thus the test that outputs planted exactly when
$\overline G\ge S_N/2$ succeeds with probability $1-o_N(1)$
in both models. Since $S_N\to\infty$ and $T$ is polynomial
in $N$, $J$ is polynomially bounded, so the test runs in
randomized polynomial time.
\end{proof}

\subsection{Generalized construction and proof of Theorem~\ref{thm:r-hpc-lower-bound}}
\label{app:generalized_hardness}

We extend the construction for Theorem~\ref{thm:4hpc-lower-bound}
to prove Theorem~\ref{thm:r-hpc-lower-bound}. As in the main text,
we first describe the reduction and establish its planted-signal
lemma, then combine it with the two general lemmas from the main
text to prove the theorem.
Throughout, fix positive integers $p,s$, and set
\[
h=\left\lceil\frac{s}{2}\right\rceil+1,\qquad r=p+s+h.
\]
Fix $\beta\in(1/3,1/2)$ such that
$\beta(p+3s+2h)>s+h$. Such a choice is possible because $p+3s+2h > 2(s+h)$. Let
$q=\lfloor N^\beta\rfloor$ and $d=q^{2s}$.
The additional condition ensures that the planted advantage
tends to infinity.

\paragraph{Reduction construction.}
We reduce $r$-HPC with planted-clique size $q$ to OCO. The $r$ hyperedge indices are divided into three groups: $p$ indices determine the comparator's switching pattern, $s$ indices jointly index the OCO coordinates, and $h$ indices provide repeated observations of each coordinate. Partition $[N]$ deterministically as
\[
[N]
=
B_1\sqcup\cdots\sqcup B_p
\sqcup
D_1\sqcup\cdots\sqcup D_s
\sqcup
W_1\sqcup\cdots\sqcup W_h,
\]
where \(|D_j|=q^2\) for every \(j\in[s],\) and split the remaining vertices evenly among the $B_a$ and $W_c$. Since $q^2=o(N)$ and $p,s,h$ are fixed, \(|B_a|,|W_c|=\Theta(N).\)

Identify the coordinates of $\mathbb R^d$ with \(\prod_{i=1}^s D_i\) whose cardinality is $q^{2s}=d$. For each
\(
\mathbf i=(i_1,\ldots,i_p)
\in \prod_{i=1}^p B_i
\)
create one block containing one round for every
\(
\boldsymbol\ell=(\ell_1,\ldots,\ell_s)
\in \prod_{i=1}^s D_i
\)
and
\(
\mathbf v=(v_1,\ldots,v_h)
\in \prod_{i=1}^h W_i.
\)
On this round let
\[
Y_{\mathbf i,\boldsymbol\ell,\mathbf v}
=
\begin{cases}
+1,
&
\text{if }
\{i_1,\ldots,i_p,\ell_1,\ldots,\ell_s,v_1,\ldots,v_h\}
\in E(H),
\\
-1,
&
\text{otherwise},
\end{cases}\,, \qquad 
f_{\mathbf i,\boldsymbol\ell,\mathbf v}(x)
=
\frac18
\left(
x_{\boldsymbol\ell}
-
Y_{\mathbf i,\boldsymbol\ell,\mathbf v}
\right)^2.
\]
Since the partition sets are disjoint, each hyperedge queried by the construction uniquely determines the tuple $(\mathbf i,\boldsymbol\ell,\mathbf v)$, so every queried hyperedge is distinct. Thus \(T=\Theta(N^{p+h}d).\) Moreover,
\[
\nabla f(x)
=
\frac{x_{\boldsymbol\ell}-Y}{4}e_{\boldsymbol\ell},
\qquad
\nabla^2 f(x)
=
\frac14 e_{\boldsymbol\ell}e_{\boldsymbol\ell}^\top,
\]
so
\[
\nabla^2f(x)-\nabla f(x)\nabla f(x)^\top
=
\frac{4-(x_{\boldsymbol\ell}-Y)^2}{16}
e_{\boldsymbol\ell}e_{\boldsymbol\ell}^\top
\succeq0.
\]
Hence the losses are $1$-exp-concave on $B_2^d$, bounded in $[0,1/2]$, with gradient norm at most $1/2$.

\paragraph{Comparator and gain.}
Now suppose
\(
H\sim\mathsf H^{(r)}_1(N,q)
\)
with planted clique $C$. Define
\[
C_{B_a}:=C\cap B_a,
\qquad
C_{D_j}:=C\cap D_j,
\qquad
C_{W_c}:=C\cap W_c,
\]
and let
\(
\mathcal C_D
:=
\prod_{i=1}^s C_{D_i}.
\)
Set
\[
u^{(0)}=0,
\qquad
u^{(1)}
=
\gamma\mathbf 1_{\mathcal C_D},
\qquad
\gamma
=
\left(\frac {q}{2N}\right)^h\,.
\]
Construct a comparator that takes the value $u^{(1)}$ on block $\mathbf i$ whenever
\(
\mathbf i
\in
\prod_{i=1}^p C_{B_i},
\)
and takes the value $u^{(0)}$ otherwise. 
Define the comparator's advantage and the learner's observable gain by
\[
\Delta=\sum_{t=1}^T(f_t(0)-f_t(u_t)),\qquad
G_t=\sum_{\tau=1}^t(f_\tau(0)-f_\tau(x_\tau)),\qquad G=G_T.
\]
The following lemmas concern this construction.
We first verify the comparator's feasibility, switch count,
and planted advantage.

\begin{lemma}
\label{lem:r-hpc-signal}
Under
\(
H\sim\mathsf H^{(r)}_1(N,q),
\)
with probability $1-o_N(1)$, the comparator $u_{1:T}$ takes values in $B_2^d$, makes $O(q^p)$ switches, and satisfies
\[
\Delta
=
\sum_{t=1}^T
\bigl(f_t(0)-f_t(u_t)\bigr)
\ge
c
\frac{q^{p+3s+2h}}{N^{s+h}}
=:S_N
\]
for some constant $c>0$.
\end{lemma}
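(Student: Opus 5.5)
The plan is to follow the five steps of the proof of Lemma~\ref{lem:4hpc-signal}, replacing each single part by a product of parts.

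\textbf{Step 1: concentration of the clique intersections.} Each intersection size $|C_{B_a}|$, $|C_{D_j}|$, $|C_{W_c}|$ is hypergeometric. The means are
\[
\mathbb E|C_{B_a}|=\Theta(q),\qquad \mathbb E|C_{W_c}|=\Theta(q),\qquad \mathbb E|C_{D_j}|=q\cdot q^2/N=\Theta(N^{3\beta-1})\to\infty,
\]
the last using $\beta>1/3$. The multiplicative Chernoff bound for sampling without replacement, together with a union bound over the fixed number $p+s+h$ of parts, gives an event $\mathcal E$ of probability $1-o_N(1)$. On $\mathcal E$ each intersection lies within a factor $2$ of its mean, and moreover $|C_{W_c}|\ge q|W_c|/(2N)$. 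We then fix a clique satisfying $\mathcal E$, with constants uniform over all such cliques.

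\textbf{Step 2: feasibility and switches.} Feasibility follows from
\[
\|u^{(1)}\|_2=\gamma\sqrt{|\mathcal C_D|}\le\gamma\sqrt d=(q/2N)^h q^{s}.
\]
Since $q\le N^{1/2}$ and $h\ge s/2+1$, we have $q^s\le N^{s/2}\le N^{h-1}$. Hence the norm is at most $q^h N^{-1}=o(1)$, because $q^h\le N^{h/2}$ and, if needed, $h\le 2$ is not required: more simply, $q^{s+h}/N^h\le N^{(s+h)/2-h}\to 0$ as $h>s/2$. For switches, the comparator is constant on blocks, and each of the at most $\prod_a|C_{B_a}|\le q^p$ active blocks contributes at most two switches. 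This gives $O(q^p)$ switches for every clique.

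\textbf{Step 3: expected gain per coordinate.} Only rounds with $\mathbf i\in\prod_a C_{B_a}$ and $\boldsymbol\ell\in\mathcal C_D$ contribute. On each such round the gain is $\gamma Y/4-\gamma^2/8$. If $\mathbf v\in\prod_c C_{W_c}$, the whole $r$-set lies in $C$, so $Y=1$. Otherwise the hyperedge is unforced and, conditional on $C$, $Y$ is an unbiased sign. Summing over $\mathbf v$ gives conditional mean
\[
\frac{\gamma}{4}\prod_c|C_{W_c}|-\frac{\gamma^2}{8}\prod_c|W_c|.
\]
On $\mathcal E$ we have $\prod_c|C_{W_c}|\ge(q/2N)^h\prod_c|W_c|=\gamma\prod_c|W_c|$. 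This choice is exactly why $\gamma=(q/2N)^h$. Consequently the conditional mean is at least
\[
\frac{\gamma^2}{8}\prod_c|W_c|=\Theta\bigl(q^{2h}/N^h\bigr).
\]

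\textbf{Step 4: total expected gain.} The number of contributing pairs is
\[
M_C=\prod_a|C_{B_a}|\prod_j|C_{D_j}|=\Theta\bigl(q^p(q^3/N)^s\bigr).
\]
Multiplying by the per-pair bound of Step 3 gives
\[
\mu_C\ge c_0\,q^{p+3s+2h}/N^{s+h}.
\]

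\textbf{Step 5: concentration of the total gain (the main obstacle).} Conditional on $C$, $\Delta-\mu_C$ is a sum of $n_C\le M_C\prod_c|W_c|=O(q^{p+3s}N^{h-s})$ independent $\pm\gamma/4$ terms. These are independent because distinct tuples give distinct unforced hyperedges, by disjointness of the parts. Hoeffding's inequality with deviation $\mu_C/2$ gives an exponent of order $\mu_C^2/(n_C\gamma^2)$. Substituting the bounds yields
\[
\frac{\mu_C^2}{n_C\gamma^2}=\Omega\!\left(\frac{q^{2(p+3s+2h)}N^{-2(s+h)}}{q^{p+3s}N^{h-s}\,q^{2h}N^{-2h}}\right)=\Omega\!\left(\frac{q^{p+3s+2h}}{N^{s+h}}\right)=\Omega(\mu_C).
\]
This tends to infinity precisely because $\beta(p+3s+2h)>s+h$, so the failure probability is $o_N(1)$ uniformly over cliques in $\mathcal E$.

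Setting $c=c_0/2$, so that $S_N\le\mu_C/2$, and adding $\Pr(\mathcal E^c)$ completes the argument. The delicate point is the exponent bookkeeping in Steps 2 and 5; every other step transfers verbatim from Lemma~\ref{lem:4hpc-signal}.
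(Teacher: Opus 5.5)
Your Steps 1 and 3--5 match the paper's proof: the same concentration event, the same per-pair bound $\gamma^2 L_W/8$ via $\prod_c|C_{W_c}|\ge\gamma L_W$, the same $\mu_C$, and the same Hoeffding exponent. The feasibility argument in Step 2, however, is wrong. You bound $|\mathcal C_D|\le d=q^{2s}$, which gives $\|u^{(1)}\|_2\le q^{s+h}/(2N)^h$. You then claim $q^{s+h}/N^h\le N^{(s+h)/2-h}\to0$ ``as $h>s/2$''. But $(s+h)/2-h=(s-h)/2$, which is negative only when $h>s$, and $h=\lceil s/2\rceil+1$ satisfies $h\le s$ for every $s\ge2$.

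For $s=2,3$ your bound is only $2^{-h}$: feasible, but not $o(1)$ as claimed. For $s\ge4$ it genuinely diverges. For example, $s=4$, $h=3$ gives $\gamma\sqrt d=q^7/(8N^3)\approx N^{7\beta-3}/8\to\infty$ for every admissible $\beta>3/7$. Your first attempt, $q^h/N=o(1)$, likewise fails once $h\ge3$, since $\beta>1/3$. Large $s$ cannot be avoided, because the theorem needs $p/(2s)=\theta$ for arbitrarily small rational $\theta$. So, as written, Step 2 does not show $u^{(1)}\in B_2^d$.

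The fix is to keep track of the size of $\mathcal C_D$ rather than bounding it by $d$. On $\mathcal E$ you already have $|C_{D_j}|\le 2q^3/N$, so $\gamma^2|\mathcal C_D|\le 2^{s-2h}q^{2h+3s}/N^{2h+s}\le 2^{s-2h}N^{s/2-h}\to0$, using $q\le N^{1/2}$ and $h>s/2$. This is exactly the paper's computation. Even the deterministic bound $|C_{D_j}|\le|C|=q$ suffices, since $\gamma^2q^s\le 4^{-h}N^{s/2-h}$. With this repair your proof coincides with the paper's.
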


\begin{proof}
Since $C$ is a uniformly random $q$-subset of $[N]$,
$\mathbb E[|C\cap V|]=q|V|/N$ for each partition part $V$.
Thus, the expected sizes of $C_{B_a}$ and $C_{W_c}$ are $\Theta(q)$,
and those of $C_{D_j}$ are $\Theta(q^3/N)\to\infty$ because $\beta>1/3$.
A Chernoff bound therefore gives,
with probability $1-o_N(1)$,
\[
\frac{q|V|}{2N}\le |C\cap V|\le\frac{2q|V|}{N}
\qquad\text{for every partition part }V.
\]
Condition on a clique $C$ satisfying this event. In particular,
$|C_{B_a}|,|C_{W_c}|=\Theta(q)$ and
$|\mathcal C_D|=\Theta(q^{3s}/N^s)$. Moreover,
\[
\|u^{(1)}\|_2^2
=\gamma^2|\mathcal C_D|
\le 2^{s-2h}\frac{q^{2h+3s}}{N^{2h+s}}
\le N^{s/2-h}=o_N(1),
\]
where $\beta<1/2$ gives $q\le N^{1/2}$, and $h>s/2$.
Thus both comparator points lie in $B_2^d$ for sufficiently
large $N$. There are $\prod_{a=1}^p|C_{B_a}|=\Theta(q^p)$
active blocks, so the comparator makes $O(q^p)$ switches.

Outside active blocks the comparator is the origin, and outside
$\mathcal C_D$ its coordinates are zero. Thus only active
block--planted-coordinate pairs contribute to $\Delta$.
Write $\mathcal W=\prod_{c=1}^h W_c$,
$M_W=\prod_{c=1}^h|C_{W_c}|$, and $L_W=|\mathcal W|$.
For each such pair $(\mathbf i,\boldsymbol\ell)$,
\[
\begin{aligned}
\mathbb E\!\left[\sum_{\mathbf v\in\mathcal W}
\bigl(f_{\mathbf i,\boldsymbol\ell,\mathbf v}(0)
-f_{\mathbf i,\boldsymbol\ell,\mathbf v}(u^{(1)})\bigr)
\,\middle|\,C\right]
&=\mathbb E\!\left[\sum_{\mathbf v\in\mathcal W}
\left(\frac{\gamma Y_{\mathbf i,\boldsymbol\ell,\mathbf v}}4
-\frac{\gamma^2}8\right)\,\middle|\,C\right] \\
&=\frac{\gamma M_W}{4}-\frac{\gamma^2L_W}{8}
\ge\frac{\gamma^2L_W}{8}
=\Omega\left(\frac{q^{2h}}{N^h}\right),
\end{aligned}
\]
where the concentration event gives
$M_W\ge(q/2N)^hL_W=\gamma L_W$.
Since there are
$(\prod_{a=1}^p|C_{B_a}|)|\mathcal C_D|
=\Theta(q^{p+3s}/N^s)$ such pairs, some constant $c_0>0$ satisfies
\[
\mu_C:=\mathbb E[\Delta\mid C]
\ge c_0\frac{q^{p+3s+2h}}{N^{s+h}}.
\]

Conditional on $C$, the random part of $\Delta$ is a sum of at most
$O(q^{p+3s}N^{h-s})$ independent terms, each taking values in
$\{-\gamma/4,\gamma/4\}$.
Choose $c=c_0/2$. Hoeffding's inequality then gives
\[
\begin{aligned}
\Pr\!\left(\Delta<c\frac{q^{p+3s+2h}}{N^{s+h}}
\,\middle|\,C\right)
&\le\exp\left(-\Omega\left(
\frac{(q^{p+3s+2h}/N^{s+h})^2}
{q^{p+3s}N^{h-s}\gamma^2}\right)\right) \\
&=\exp\left(-\Omega\left(
\frac{q^{p+3s+2h}}{N^{s+h}}\right)\right)=o_N(1).
\end{aligned}
\]
The final equality follows from
$\beta>(s+h)/(p+3s+2h)$, which ensures
$q^{p+3s+2h}/N^{s+h}\to\infty$.
Combining this with the concentration event proves the claim.
\end{proof}

The remaining steps use Lemmas~\ref{lem:4hpc-null}
and~\ref{lem:4hpc-detection} directly. The losses have the same
coordinate-quadratic form, and distinct queried hyperedges give
fresh unbiased signs under the null model. The planted event
is supplied by Lemma~\ref{lem:r-hpc-signal}; moreover,
$S_N\to\infty$ by the condition on $\beta$, and the reduction
has polynomial length and running time for fixed $p,s,h$.

\begin{proof}[Proof of Theorem~\ref{thm:r-hpc-lower-bound}]
Suppose a randomized polynomial-time learner guarantees
$\widetilde O(k^a d^b)$ expected regret uniformly over instances
with $k=\Theta(d^\theta)$, for a fixed rational $\theta>0$
and fixed $a,b\ge0$ satisfying $\theta a+b<\theta+1/2$.
Choose positive integers $p,s$ with $p/(2s)=\theta$.
Then the hypothesis gives
\[
pa+2sb=2s(\theta a+b)<2s(\theta+1/2)=p+s.
\]
Set $h=\lceil s/2\rceil+1$ and $r=p+s+h\ge4$, and choose
\[
\beta\in\left(
\max\left\{\frac13,
\frac{s+h}{p+3s+2h-pa-2sb}\right\},\frac12\right).
\]
This interval is nonempty because
$p+3s+2h-pa-2sb>2(s+h)$.
It also satisfies the condition on $\beta$ used in the construction.

By Lemma~\ref{lem:r-hpc-signal}, on its high-probability event
the comparator uses two points and makes at most $k$ switches,
where $k$ is a sufficiently large constant multiple of $q^p$.
Since $d=q^{2s}$, we have $k=\Theta(d^{p/(2s)})=\Theta(d^\theta)$.
Moreover, $q=\Theta(N^\beta)$ gives
$T=\Theta(N^{p+h}d)=\Theta(d^{1+(p+h)/(2s\beta)})$,
which is polynomial in $d$.
Thus the assumed guarantee applies and is
$\widetilde O(q^{pa+2sb})$, whereas
\[
S_N=c\frac{q^{p+3s+2h}}{N^{s+h}},\qquad
\beta(p+3s+2h-pa-2sb)>s+h.
\]
Consequently, the assumed regret guarantee is $o(S_N)$.
For all sufficiently large $N$, it is at most $S_N/4$.
Lemma~\ref{lem:4hpc-detection} then yields a randomized
polynomial-time test for $r$-HPC, contradicting
Conjecture~\ref{conj:hpc}.
\end{proof}

\section{Follow-the-Leading-History and the Switching Bound}
\label{app:flh}

We recall the pruned Follow-the-Leading-History (FLH)
algorithm of \citet{hazan2009changing}, used in
Theorem~\ref{thm:static-to-adaptive}.
Each expert $E_i$ is a fresh copy of the base learner
$\mathcal B$, initialized at round $i$ and trained only
on subsequent losses.

\paragraph{Pruning schedule.}
Write $i=2^{v(i)}q$ with $q$ odd, and set
$L_i=2^{v(i)+2}+1$.
The active set on round $t$ is
\[
    S_t=\{i\in[t]:t\le i+L_i\}.
\]
Thus experts are discarded after their lifetimes expire.
This schedule maintains $O(\log T)$ active experts and
satisfies
\[
    S_t\cap
    \left[r,\left\lfloor\frac{r+t}{2}\right\rfloor\right]
    \ne\varnothing
    \qquad\text{for every }1\le r\le t.
\]

\begin{algorithm}[htbp]
\floatconts
{alg:flh}%
{\caption{Pruned Follow-the-Leading-History (FLH)}}%
{%
\begin{algorithmic}[1]
\STATE \textbf{Input:} Base Learner $\mathcal B$,
exp-concavity parameter $\alpha>0$, horizon $T$
\STATE Initialize $E_1\gets\mathcal B$, $S_1\gets\{1\}$,
and $p_1^{(1)}\gets1$
\FOR{$t=1,\ldots,T$}
    \STATE Obtain $x_t^{(i)}$ from each $E_i$, $i\in S_t$
    \STATE Play
    $x_t\gets\sum_{i\in S_t}p_t^{(i)}x_t^{(i)}$
    \STATE Observe $f_t$ and update every active $E_i$
    \IF{$t<T$}
        \STATE \textbf{Multiplicative update:} For all $i\in S_t$,
        \[
            \widehat p_{t+1}^{(i)}
            \gets
            \frac{p_t^{(i)}
            e^{-\alpha f_t(x_t^{(i)})}}
            {\sum_{j\in S_t}p_t^{(j)}
            e^{-\alpha f_t(x_t^{(j)})}}.
        \]
        \STATE \textbf{Addition:} Initialize
        $E_{t+1}\gets\mathcal B$ and set
        \[
            \widetilde p_{t+1}^{(t+1)}
            \gets\frac{1}{t+1},
            \qquad
            \widetilde p_{t+1}^{(i)}
            \gets\frac{t}{t+1}\widehat p_{t+1}^{(i)}
            \quad(i\in S_t).
        \]
        \STATE \textbf{Pruning:} Set
        $S_{t+1}\gets
        \{i\in S_t\cup\{t+1\}:t+1\le i+L_i\}$
        \STATE Discard experts outside $S_{t+1}$ and set
        \[
            p_{t+1}^{(i)}
            \gets
            \frac{\widetilde p_{t+1}^{(i)}}
            {\sum_{j\in S_{t+1}}\widetilde p_{t+1}^{(j)}}
            \quad(i\in S_{t+1}).
        \]
    \ENDIF
\ENDFOR
\end{algorithmic}
}%
\end{algorithm}

Exp-concavity controls the aggregation cost, while the
pruning schedule provides interval coverage with logarithmic
computational overhead. The resulting adaptive-regret bound
is stated in Theorem~\ref{thm:static-to-adaptive}.

\FloatBarrier
\subsection{Proof of the Switching Corollary}
\label{app:upper-bound-proofs}

\begin{proof}[Proof of Corollary~\ref{cor:static-to-switching}]
Fix any comparator path $u_{1:T}\in\mathcal U_{m,k}(\K)$.
Let $I_1,\ldots,I_s$ be its maximal constant segments, where
$s\le k+1$, and let $v_j$ be its constant value on $I_j$.
Since $v_j$ is one possible fixed decision on $I_j$, the
definition of adaptive regret gives
\[
\begin{aligned}
    \sum_{t\in I_j}\bigl(f_t(x_t)-f_t(v_j)\bigr)
    &\le \sum_{t\in I_j}f_t(x_t)
        -\min_{v\in\K}\sum_{t\in I_j}f_t(v) \le \mathsf{AReg}_T(\mathcal A).
\end{aligned}
\]
Summing over the segments and considering the best comparator path in the set $\mathcal U_{m,k}(\K)$ gives the claimed bound.
\end{proof}

\clearpage
\section{From a Hypergraph to an OCO Block}
\label{app:construction-illustration}

Figure~\ref{fig:planted-block} illustrates the four-uniform
construction, tracing a tuple $(i,j,\ell,v)$ from the hypergraph
to its quadratic loss and its position in an active OCO block.

\begin{figure}[!htbp]
\floatconts
{fig:planted-block}
{\caption{From a hypergraph to a loss sequence. Blue outlines link
one queried four-vertex hyperedge to its cell in an OCO block.
Orange vertices belong to the hidden clique, forcing $+1$ labels
on $C_3\times C_4$; all remaining cells contain independent fair
signs conditional on the clique. The comparator predicts $\gamma$
on $C_3$ rows throughout the block. Only one hyperedge is drawn.
Clique membership is shown for explanation, and the schematic
grid groups clique rows and columns for visualization.}}
{\includegraphics[width=0.92\linewidth]{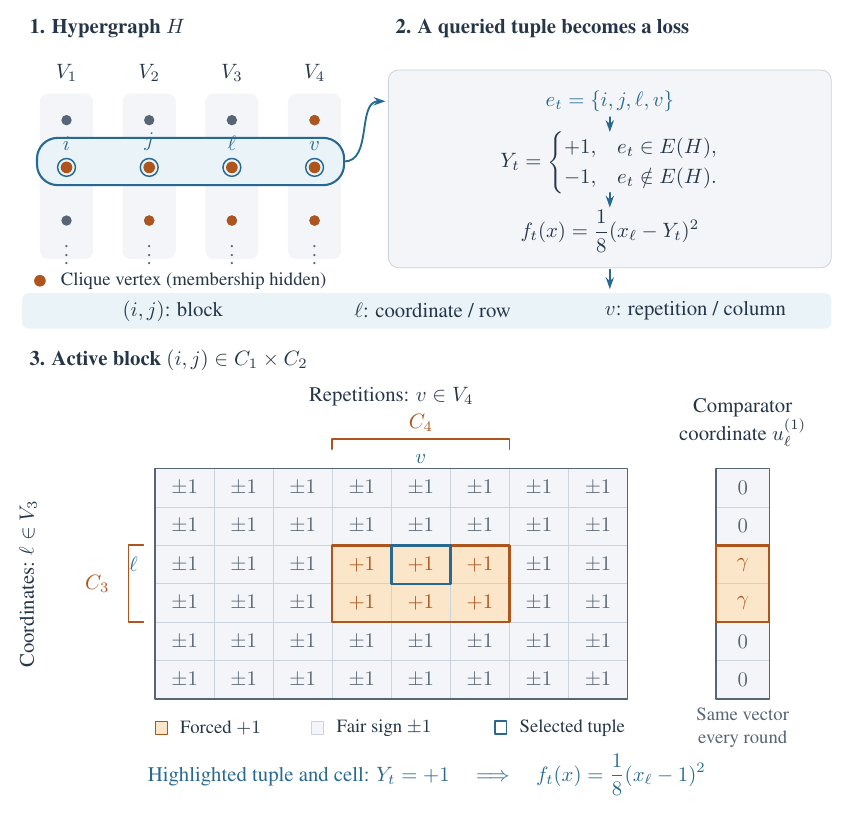}}
\end{figure}

The positive bias on $C_3$ rows makes predicting $\gamma$
profitable in expectation. On an inactive block, where
$(i,j)\notin C_1\times C_2$, all labels are unbiased and the
comparator plays the origin.

\end{document}